\def\floor#1{\lfloor #1 \rfloor}
\def\1{\bm{1}}
\DeclareMathAlphabet{\mathsfit}{\encodingdefault}{\sfdefault}{m}{sl}
\SetMathAlphabet{\mathsfit}{bold}{\encodingdefault}{\sfdefault}{bx}{n}
\def\gB{{\mathcal{B}}}
\def\gC{{\mathcal{C}}}
\def\gF{{\mathcal{F}}}
\def\gG{{\mathcal{G}}}
\def\gH{{\mathcal{H}}}
\def\gN{{\mathcal{N}}}
\def\gY{{\mathcal{Y}}}
\newcommand{\R}{\mathbb{R}}
\newtheorem{theorem}{Theorem}
\newtheorem{corollary}{Corollary}
\newtheorem{lemma}{Lemma}
\newtheorem{proposition}{Proposition}
\theoremstyle{definition}
\newtheorem{definition}{Definition}
\newtheorem{remark}{Remark}
\newtheorem{example}{Example}
\renewcommand{\hat}{\widehat}
\renewcommand{\tilde}{\widetilde}
\newcommand{\vertiii}[1]{{\left\vert\kern-0.25ex\left\vert\kern-0.25ex\left\vert #1 
    \right\vert\kern-0.25ex\right\vert\kern-0.25ex\right\vert}}
\title{Improved Generalization Bounds of Group Invariant / Equivariant \\Deep Networks via Quotient Feature Spaces}
\author[1,2]{Akiyoshi Sannai} 
\author[3,1]{Masaaki Imaizumi}
\author[3]{Makoto Kawano}
\affil[1]{%
    RIKEN Center for Advanced Intelligence Project\\
    Chuo, Tokyo, Japan
}
\affil[2]{%
    Keio University\\
    Minato, Tokyo, Japan
}
\affil[3]{
    The University of Tokyo\\
    Bunkyo, Tokyo, Japan
}
\begin{document}
\maketitle

\begin{abstract}
 Numerous invariant (or equivariant) neural networks have succeeded in handling invariant data such as point clouds and graphs. However, a generalization theory for the neural networks has not been well developed, because several essential factors for the theory, such as network size and margin distribution, are not deeply connected to the invariance and equivariance. In this study, we develop a novel generalization error bound for invariant and equivariant deep neural networks. To describe the effect of invariance and equivariance on generalization, we develop a notion of a \textit{quotient feature space}, which measures the effect of group actions for the properties. Our main result proves that the volume of quotient feature spaces can describe the generalization error. Furthermore, the bound shows that the invariance and equivariance significantly improve the leading term of the bound. We apply our result to specific invariant and equivariant networks, such as DeepSets \citep{zaheer2017deep}, and show that their generalization bound is considerably improved by $\sqrt{n!}$, where $n!$ is the number of permutations. We also discuss the expressive power of invariant DNNs and show that they can achieve an optimal approximation rate. Our experimental result supports our theoretical claims. 
\end{abstract}

\section{Introduction}\label{sec:intro}
Group invariant (or equivariant) deep neural networks have been extensively utilized in data analysis \citep{shawe1989building,shawe1993symmetries,ntampaka2016dynamical,ravanbakhsh2016estimating,faber2016machine,cohen2016group,zaheer2017deep,li2018so,su2018splatnet,li2018pointcnn,yang2018foldingnet,xu2018spidercnn,lenssen2018group,cohen2019general}.
A typical example is permutation invariant deep neural networks for point cloud data.
The data are given as a set of points, and permuting points in the data does not change the result of its prediction \citep{zaheer2017deep,li2018so,su2018splatnet,li2018pointcnn,yang2018foldingnet,xu2018spidercnn, ntampaka2016dynamical,ravanbakhsh2016estimating,faber2016machine}.
Another example is graph neural networks for graph data, that are represented by a column-and-row-permutation invariant adjacency matrix \citep{bruna2013spectral,henaff2015deep,monti2017geometric,ying2018hierarchical}.
The group invariant and equivariant neural networks can significantly improve the accuracy of prediction with limited data size and network size \citep{zaheer2017deep,li2018pointcnn,li2018so}.
Their theoretical properties have been investigated as well.
Universal approximation properties are proved for several invariant and equivariant neural networks \citep{yarotsky2018universal,maron2019universality,sannai2019universal,segol2019universal,ravanbakhsh2020universal}.

Despite the impact and high empirical accuracy, the generalization error of group invariant / equivariant neural networks has not been well clarified yet.
This is because there are several theoretical difficulties in connecting invariance with generalization theory.
First, invariance is not strongly connected to common factors that are important to the theory.
The generalization error bounds of ordinary deep neural networks are mainly controlled by their depth, width, number of trainable parameters, and margin distributions \citep{anthony2009neural,neyshabur2015norm,bartlett2017spectrally}.
However, invariance and equivariance are determined independently of these factors. 
Second, there are few quantitative features which can assess invariance and equivariance.
Without a quantitative criterion, it is not possible to measure how invariance and equivariance affect on generalization errors.


In this study, we establish a unified generalized error bound by developing a quantitative measure to describe the effects of invariance and equivariance.
For a deep neural network $f$, let $R(f)$ be its expected loss and $R_m(f)$ be its empirical loss with $m$ training samples.
For a set of neural networks $\gF$, we are interested in the following value
\begin{align}
    \gG(\gF) := \sup_{f \in \gF} |R(f) - R_m(f)|, \label{def:ggap}
\end{align}
which is referred as a bound on \textit{generalization gap} or \textit{generalization error}.
Our theory can describe significant improvements in the generalization error bounds of invariant / equivariant neural networks.
We summarize our results as follows.

\textit{(i) Generalization Bound with Quotient Feature Space}:
We develop a notion of a  \textit{quotient feature space} (QFS) and prove that the generalization error bound of invariant / equivariant neural networks is described by the volume of QFSs.
For a finite group $G$, we define a {quotient feature map} $\phi_G : \mathbb{R}^n \to \mathbb{R}^n/G$ and then define a QFS as
    $\Delta_G := \phi_G([0,1]^n)$,
which is regarded as a feature space associated with $G$.
Our results show that the generalization error is proportional to the square root of the volume of $\Delta_G$ (invariant case) or $\Delta_{\mathrm{St}(G)}$ (equivariant case), where $\mathrm{St}(G) \subset G$ is a subgroup of elements whose first coordinates are fixed, named a \textit{stabilizer subgroup}.
In short, with a set of $G$-invariant deep neural networks $\gF^G$, we obtain the following intuition:
\begin{align*}
    \gG(\gF^G) \propto \sqrt{\mathrm{vol}(\Delta_G)}
\end{align*}
Theorem \ref{thm:main1} shows a rigorous statement.
Figure \ref{fig:domains} provides examples of QFSs with several $G$.

\textit{(ii) Roles of Invariance for Generalization}:
We identify how invariance improves generalization through the result with QFSs.
We consider the symmetric group $ G=S_n$ for example.
In this case, we derive the following bound:
\begin{theorem}[Informal Corollary \ref{cor:sn}]
Let $\gF^{S_n}$ be a set of $S_n$-invariant deep neural networks.
For any $\varepsilon> 0$,
    \begin{align*}
        \gG(\gF^{S_n}) \leq  O \left(\sqrt{\frac{1 }{n!~m^{2/n}}} \right)+ {\sqrt{\frac{2\log  ( 1/2\varepsilon )}{m}}},
    \end{align*}
    holds with at least probability $1-2\varepsilon$.
\end{theorem}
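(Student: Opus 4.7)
The plan is to specialize Theorem \ref{thm:main1} to the case $G = S_n$ and compute the volume of the associated quotient feature space explicitly. First, I would identify a fundamental domain for the action of $S_n$ on $[0,1]^n$ by coordinate permutation. A natural choice is the sorted simplex
\begin{align*}
F := \{ \vx \in [0,1]^n : 0 \leq x_1 \leq x_2 \leq \cdots \leq x_n \leq 1 \}.
\end{align*}
For generic $\vx \in [0,1]^n$ (outside the measure-zero locus where some pair of coordinates coincide), the orbit $S_n \cdot \vx$ has cardinality $n!$ and exactly one orbit representative lies in $F$. Since $S_n$ acts by isometries, Lebesgue measure descends to the quotient, and the quotient feature map $\phi_{S_n}$ identifies $F$ with $\Delta_{S_n}$ in a volume-preserving way. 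In particular, the $n!$ images of $F$ under $S_n$ tile $[0,1]^n$ up to measure-zero overlaps, giving
\begin{align*}
\mathrm{vol}(\Delta_{S_n}) \;=\; \mathrm{vol}(F) \;=\; \frac{1}{n!}.
\end{align*}

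Next, I would invoke Theorem \ref{thm:main1}, which (in its rigorous form) delivers a bound of the type
\begin{align*}
\gG(\gF^G) \;\leq\; C\sqrt{\frac{\mathrm{vol}(\Delta_G)}{m^{2/n}}} \;+\; \sqrt{\frac{2\log(1/2\varepsilon)}{m}},
\end{align*}
holding with probability at least $1 - 2\varepsilon$. Here the first term originates from a Rademacher / covering-number argument on the QFS (the covering number at scale $\delta$ of a region of volume $V$ in $\R^n$ scales like $V/\delta^n$, which after the usual Dudley-type integration produces the $\mathrm{vol}(\Delta_G)^{1/2} m^{-1/n}$ dependence), while the second term is the standard McDiarmid concentration correction. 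Substituting $\mathrm{vol}(\Delta_{S_n}) = 1/n!$ yields exactly the claimed corollary.

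The main obstacle — assuming Theorem \ref{thm:main1} is available — is the volume computation, and more precisely the justification that the quotient map is measure-preserving so that the $\sqrt{1/n!}$ factor enters cleanly. The subtle point is handling the boundary strata of $F$ (hyperplanes $x_i = x_{i+1}$) where the stabilizer of a point is nontrivial; however these form a Lebesgue-null set and do not affect the volume of the quotient. I expect no other difficulty: once the fundamental-domain picture is established, the corollary follows by direct substitution into the general bound, and the factor $\sqrt{n!}$ improvement over the non-invariant case is read off immediately by comparing $\mathrm{vol}(\Delta_{S_n}) = 1/n!$ with $\mathrm{vol}(\Delta_{\{e\}}) = 1$.
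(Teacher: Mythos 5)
Your proposal is correct and follows essentially the same route as the paper: Corollary \ref{cor:sn} is obtained by substituting $|S_n| = n!$ into Theorem \ref{thm:main1}, whose main term is exactly the $\sqrt{\mathrm{vol}(\Delta_G)}$ dependence you describe via the Rademacher/Dudley argument. Your fundamental-domain tiling computation $\mathrm{vol}(\Delta_{S_n}) = 1/n!$ is the Lebesgue-measure version of the paper's Lemma \ref{lem:delta}, which carries out the same count at the level of covering numbers (the measure-zero boundary strata you flag correspond there to the $O(\varepsilon^{-(n-1)})$ cubes meeting the walls $x_i = x_{i+1}$).
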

This bound reveals two properties of invariant networks.
First, the scale of the bound is improved by $\sqrt{n!}$. 
This result follows the fact that the generalization gap is proportional to the size of QFSs.
This improvement is significant, since $n$ takes a large value in recent point cloud data, for example $n > 1,000$.
Second, it slows down the convergence rate in a number of samples $m$.
This deterioration is a price of gaining the factorial improvement in $n$.
However, as shown in Figure \ref{fig:bound_oridinal}, the factorial improvement greatly outweighs the rate deterioration.

\begin{figure}
    \centering
    \includegraphics[width=0.8\hsize]{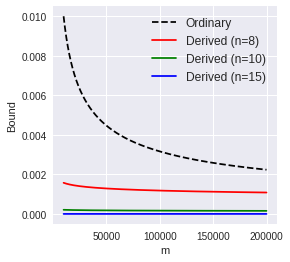}
\vskip -0.1in
    \caption{Order of the bound for the generalization gap against $m$. \textit{Ordinary} (dashed line) denotes $(1/\sqrt{m})$ without invariance, and \textit{Derived} (colored lines) denote the bound $( 1/\sqrt{n! m^{2/n}})$ with $n \in \{8,10,15\}$. Regardless of the effect of $m$, the derived bound gets tight sharply as $n$ increases. }
    \label{fig:bound_oridinal}
\end{figure}



\begin{figure*}[t]
    \centering
      \begin{minipage}{0.24\hsize}
        \centering
          \includegraphics[width=0.99\hsize]{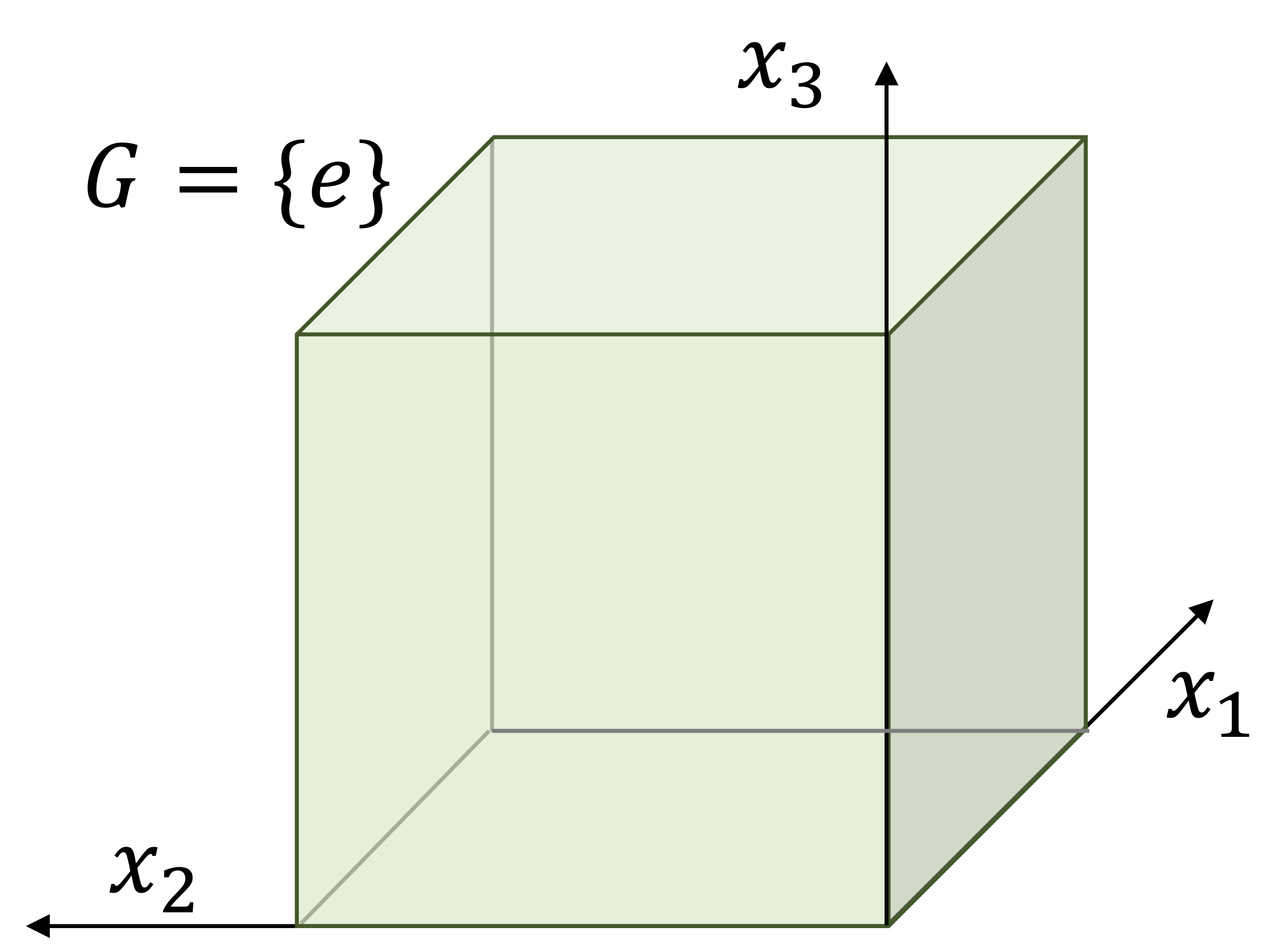}
      \end{minipage}
            \begin{minipage}{0.24\hsize}
        \centering
          \includegraphics[width=0.99\hsize]{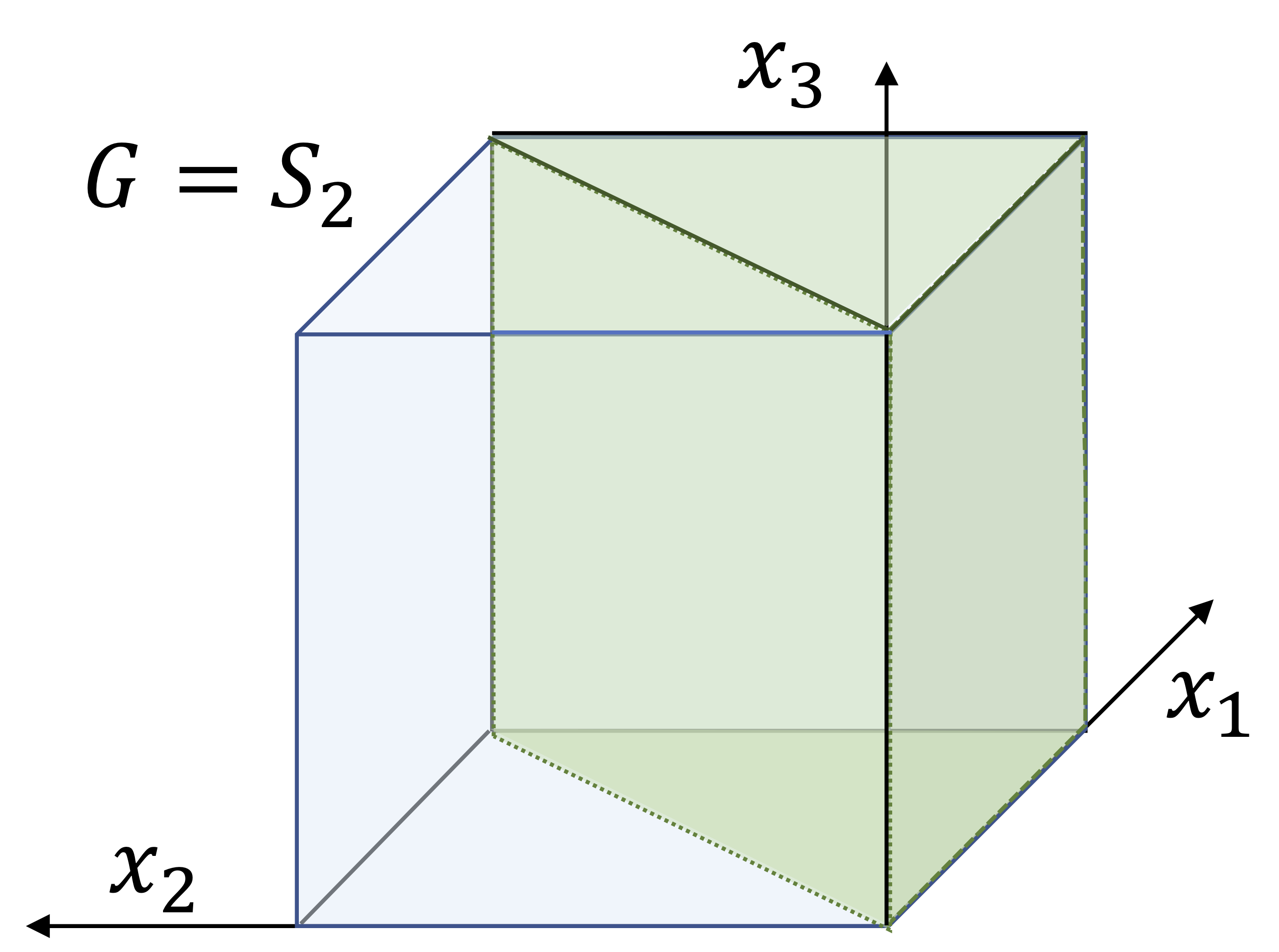}
      \end{minipage}
            \begin{minipage}{0.24\hsize}
        \centering
          \includegraphics[width=0.99\hsize]{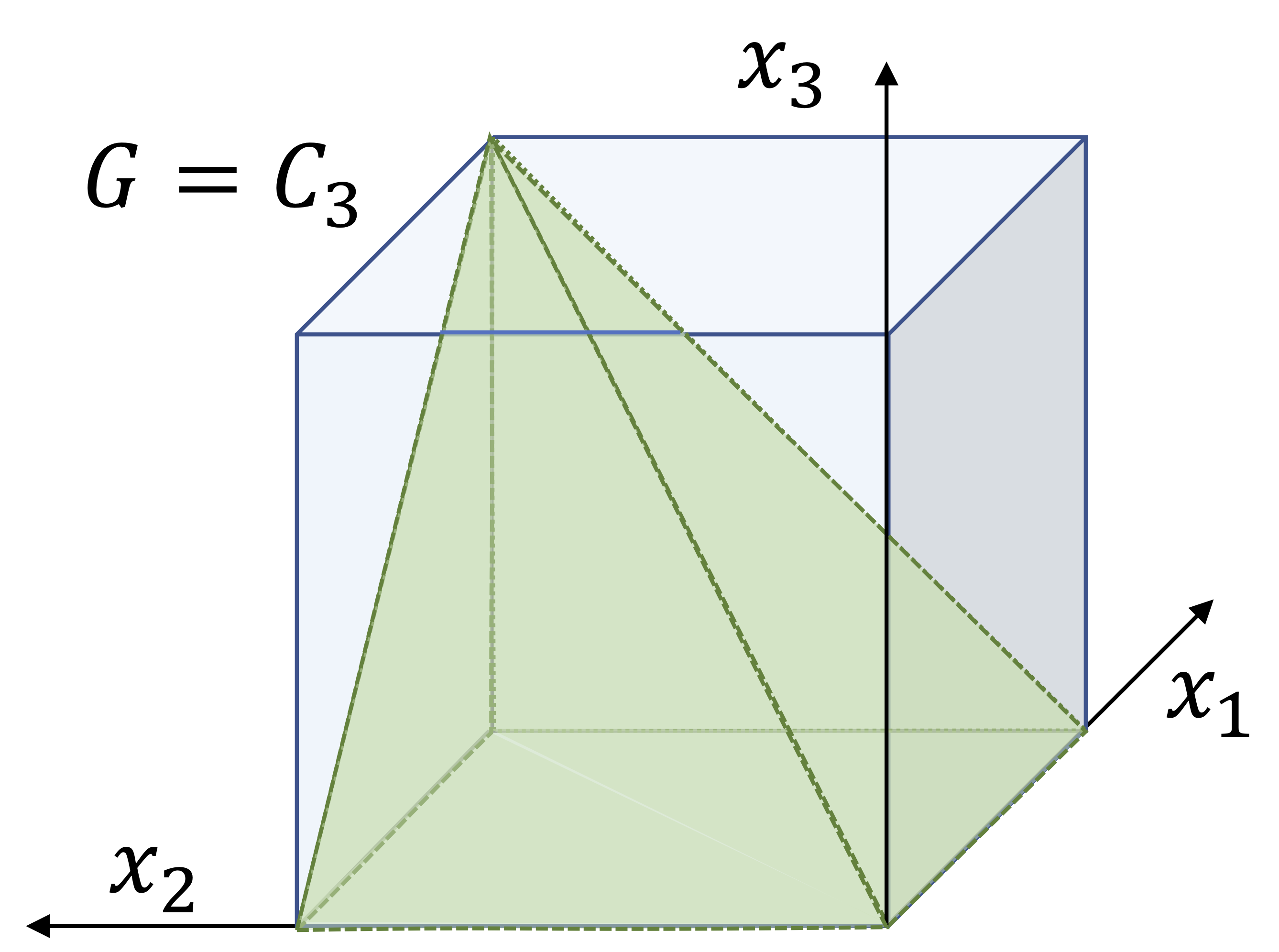}
      \end{minipage}
            \begin{minipage}{0.24\hsize}
        \centering
          \includegraphics[width=0.99\hsize]{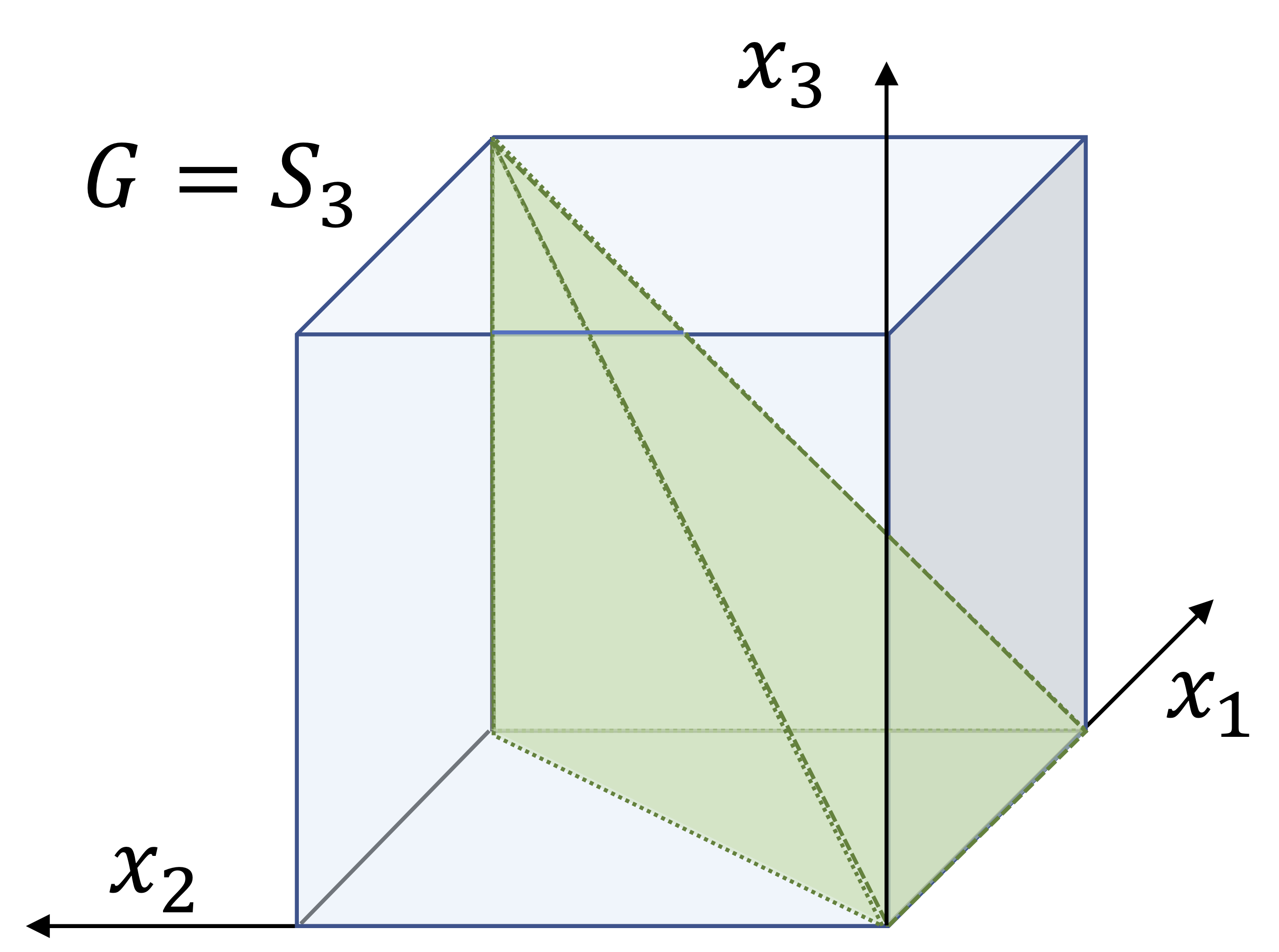}
      \end{minipage}
      \caption{Example of quotient feature spaces with $n=3$. (i) trivial group case ($G=\{e\}$): $\Delta_{\{e\}} = [0,1]^3$. (ii) symmetric group case ($G=S_2$): $\Delta_{S_2} = \{x \in [0,1]^3 \mid x_1 \geq x_2\}$. (iii) cyclic group case ($G=C_3$):  $\Delta_{C_3} = \{x \in [0,1]^3 \mid x_1 \geq x_2 \geq x_3\}\cup \{x \in [0,1]^3 \mid x_1 \leq x_2 \leq x_3\}$. (iv) symmetric group case ($G=S_3$): $\Delta_{S_3} = \{x \in [0,1]^3 \mid x_1 \geq x_2 \geq x_3\}$. \label{fig:domains}}
\end{figure*}

We have mainly two technical contributions.
First, we define the notion of a QFS and show its geometric properties, then derive its volume with a wide class of $G$.
Second, we show a connection between a set of invariant / equivariant neural networks and the volume of QFSs, then describe their generalization bounds by the volume.
Furthermore, we investigate the expressive power of $S_n$-invariant deep neural networks and show their expressive power attains an optimal rate.

\begin{table*}[t]
    \centering
    \begin{tabular}{cccc} \hline 
        Deep Network & Group  & $\mathrm{vol}(\Delta_{G})$ & $\mathrm{vol}(\Delta_{\mathrm{St}(G)})$ \\ \hline
        Deep Sets \citep{zaheer2017deep} & $S_n$  & $O(1/(n!))$ & $O(1/((n-1)!))$ \\
         $G$-CNN \citep{cohen2016group} & $C_4$ &  $O(1/4)$  &  $O(1)$ \\
        Graph Network \citep{maron2018invariant}& $S_n \subset S_{n^2}$   & $O(1/(\# \mbox{of nodes})!)$ & $O(1/((\# \mbox{of nodes}) -1)!)$ \\
        Tensor Network \citep{maron2019universality} & $G \subset S_n$  & $O(1/|G|)$ & $O(1/|\mathrm{St}(G)|)$ \\
 DSS \citep{maron2020learning} & $S_n\times G'~(G' \subset S_N)$  & $O(1/(n!|G'|))$ & $O(1/((n-1)!|\mathrm{St}(G')|)$ \\\hline
    \end{tabular}
    \caption{ Examples of invariant / equivariant DNNs utilized in practice. $S_n$ denotes a symmetric group of order $n!$, and $C_n$ denotes a cyclic group of order $n$. $G$ denotes a subgroup of the permutation group $S_n$ of axes of the input space $\R^n$. $G'$  denotes a subgroup of the permutation group $S_N$ of axes of the input space $\R^{n \times N}$.
 We set $\mathrm{vol}(\Delta_G) = \gN_{\varepsilon,\infty}(\Delta_G)$, where $\gN_{\varepsilon,\infty}(\Delta_G)$ is a covering number of $\Delta_G$ in terms of $\|\cdot\|_\infty$. DSS was referred to as ``Deep Sets for Symmetric elements layers'' \citep{maron2020learning}. 
    \label{tab:list_dnn}}
\end{table*}

\subsection{Related Work}
There are several works studying the generalized performance and sample complexity of neural networks with invariance / equivariance.
\cite{shawetaylor1995sample} shows that the sample complexity increases by a number of equivalent classes.
The closest work with our study is  \cite{sokolic2017generalization}, which considers a general algorithm for the classification problem.
Their generalization bound is proportional to $\sqrt{1/T}$, where $T$ is the number of \textit{transformations} generated by the invariance property.
While their research is excellent, we improve their work in two ways.
(I) Our result has a more concrete structure: our generalization bound describes an explicit role of invariance and equivariance through the notion of QFSs.
Owing to QFSs, our result can be applied to various cases such as graphs.
(II) We relax their strong assumptions on stability and provide accurate analysis.
We provide its detail in Section \ref{sec:relation}.
In fact, our theoretical results are not limited to deep neural networks.
However, most of the models that can control invariant data with large $n$, such as point clouds and large graphs, are mainly handled by deep neural networks \citep{zaheer2017deep,maron2018invariant,maron2019universality,maron2020learning}.
Hence, we regard neural networks as the main application of our theory.
\subsection{Notation}
For a vector $b \in \R^D$, its $d$-th element is denoted by $b_{d}$.
For a function $f:\Omega \to \R$ with a set $\Omega$, $\|f\|_{L^q}:= ( \int_\Omega |f(x)|^q dx)^{1/q}$ denotes the $L^q$-norm for $q \in [0,\infty]$. 
For a subset $\Lambda \subset \Omega$, $f_{\restriction_{\Lambda}}$ denotes a restriction of $f$ to $\Lambda$.
$C(\Omega)$ denotes a set of continuous functions on $\Omega$.
For an integer $z$, $z!= \prod_{j=1}^n j$ denotes a factorial of $z$.
For a set $\Omega$, $\mbox{id}_{\Omega}$ or $\mbox{id}$ denotes the identity map on $\Omega$, namely $\mbox{id}_{\Omega}(x) = x$ for any $x \in \Omega$.  
For a subset $\Delta \subset \R^n$, $\mbox{int}(\Delta)$ denotes a set of inner points of a set $\Delta$. 
For metric spaces $\Delta$ and $\Delta'$, $\Delta \cong \Delta'$ denotes they are isomorphic as metric spaces. 
The supplementary material maintains all full proofs.

\section{Definition and Problem Setting} \label{sec:setting}

\subsection{Invariance / Equivariance and Deep Neural Network}

We provide a general concept of the invariance and equivariance of functions.
Throughout this paper, we consider a finite group $G \leq S_n$, where $S_n$ denotes the symmetric group.
\begin{definition}[Invariant / Equivariant Function]
For a group $G$ acting on $\R^n$ and $\R^M$, a function $f \colon \R^n \to \R^M$ is 
\begin{itemize}
  \setlength{\parskip}{0cm}
  \setlength{\itemsep}{0cm}
    \item {\em $G$-invariant} if $f(g\cdot x)=f(x)$ holds for any $ g \in G$ and any $x \in \R^n$,
    \item {\em $G$-equivariant} if  $f(g\cdot x)=g \cdot f(x)$ holds for any $ g \in G$ and any $x \in \R^n$.  
\end{itemize}
\end{definition}
For a set $\Omega$, $C^G(\Omega)$ denotes a set of $G$-invariant an continuous functions on $\Omega$.



We formulate deep neural networks (DNNs) with invariance and equivariance.
In this study, we consider fully connected DNNs with the ReLU activation function $\mbox{ReLU}(x)=\max(0,x)$.
Let us consider a layer-wise map $Z_i:\mathbb{R}^{d_i}\to \mathbb{R}^{d_{i+1}}$ defined by $Z_i(x) =  \mbox{ReLU}(W_ix+b_i)$, where $W_i \in \mathbb{R}^{d_{i+1}\times d_i}$ and $b_i \in \mathbb{R}^{d_{i+1}}$ for $i=1,...,H$.
Here, $H$ is a depth, and $d_i$ is a width of the $i$-th layer.
Then, a function by DNNs has the following formulation
\begin{align}
    f(x) := Z_H \circ Z_{H-1} \ldots Z_2 \circ Z_1(x). \label{def:dnn}
\end{align}
Further, let $\gF$ be a set of functions with the form \eqref{def:dnn}.

We define a function by invariant and equivariant DNNs.
\begin{definition}[Invariant / Equivariant Deep Neural Network]
    A function $f \in \gF$ is a \textit{$G$-invariant / equivariant DNN}, if $f$ is a $G$-invariant / equivariant function.
\end{definition}
This definition is a general notion and represents several explicit invariant DNNs.
We provide several representative examples as follows.
\begin{example}[Deep Sets]
    A permutation-invariant ($S_n$-invariant) DNN was developed by \cite{zaheer2017deep}.
    Its architecture has $J$ middle permutation-equivariant ($S_n$-equivariant) layers $Z_1,...,Z_J$, a permutation-invariant linear layer $Z_L$, and a fully-connected layer $Z_{F}$.
    Each equivariant layer maintains a parameter matrix
 $W_i = \lambda \bm{I} + \gamma (\bm{1}\bm{1}^{\top}), ~\lambda, \gamma \in \mathbb{R}, \bm{1}= [1,...,1]^{\top}$,
 which makes $Z_j$ be equivariant.
 Then, a DNN $f = Z_F \circ Z_L \circ Z_J \circ \cdots Z_1$ is a permutation-invariant DNN.
\end{example}

\begin{example}[Tensor Network]
For a finite group $G \subset S_n$, 
a $G$-invariant / equivariant DNN was developed by \cite{maron2019universality} using a notion of higher-order tensors.
The study considered a tensor $W \in \mathbb{R}^{n^k \times a}$ and an action $g \in G$ on the tensor as
$(g \cdot W)_{i_1,..,i_k,j} = W_{g^{-1}(i_1),..,g^{-1}(i_k),j}$,
for $i_{k'}=1,...,n,k'=1,...,k$, and $j=1,...,a$.
With the action, the study developed a $G$-invariant / equivariant DNN.
Since $G$ is a finite group, the model is a specific case of our setting.
\end{example}

\subsection{Formulation of Learning Problem}

We formulate our learning problem with DNNs.
Let $I = [0,1]^{n}$ be an input space with $n \in \mathbb{N}$ and $\R^M$ be an output space with $M \in \mathbb{N}$.
Let $L:\R^M \times \R^M \to \R$ be a loss function which satisfies $\sup_{y,y' \in \gY}|L(y,y')| \leq 1$ and $1$-Lipschitz continuous.
Let $P^*(x,y)$ be a distribution on $I \times \R^M$ which generate data, and $R(f) = \mathbb{E}_{(X,Y)\sim P^*}[L(f(X),Y)]$ for $f : I \to \R^M$ be the expected loss of $f$.
Suppose we have a training dataset $ \{(X_1,Y_1),...,(X_m,Y_m)\}$ of size $m$ which is independently generated from $P^*$.
Let $R_m(f) := m^{-1}\sum_{i=1}^m L(f(X_i),Y_i)$ be an empirical loss with $f$.
Our interest is to bound $\gG(\gF)$ as \eqref{def:ggap} which illustrates how minimizing $R_m(f)$ on $\gF$ affects $R(f)$.


\if0
\begin{figure}[htbp]
\centering
\begin{minipage}{0.48\hsize}
        \centering
        \includegraphics[width=0.89\hsize]{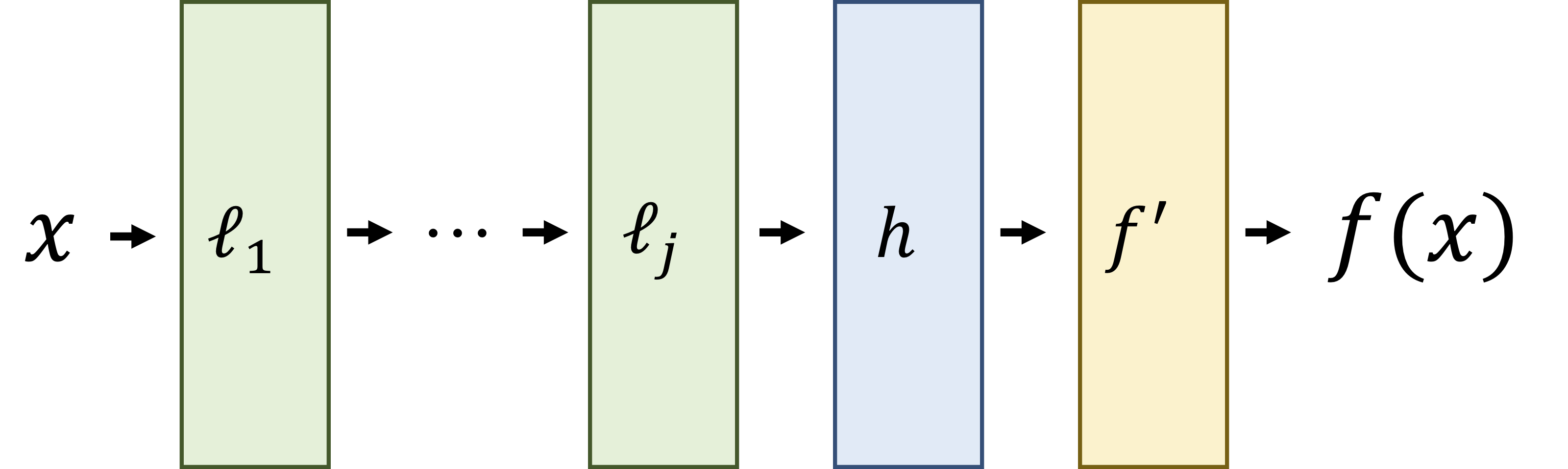}
        \caption{The invariant deep neural network by Deep Sets \cite{zaheer2017deep}. $\ell$ is an equivariant layer, $h$ is a linear invariant layer, and $f'$ is a function by networks.\label{fig:deepset}}
\end{minipage}
~
\begin{minipage}{0.48\hsize}
\centering
\includegraphics[width=0.7\hsize]{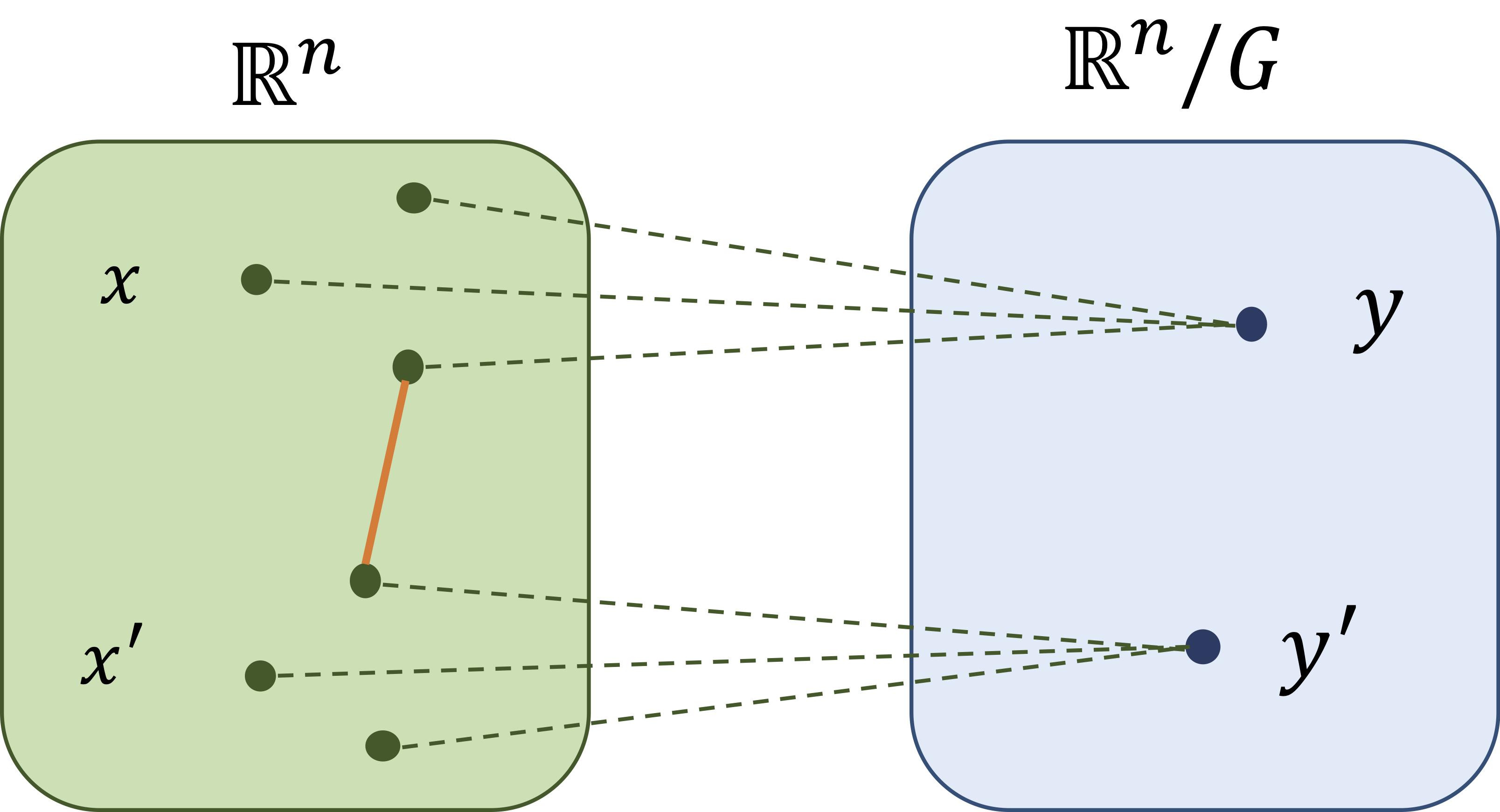}
\caption{Illustration of $d_G$ with distance preserving $g \in G$. 
The orange lines represents $d_G$ which is the shortest connection between two sets of points $\{x \mid \phi_G(x)=y\}$ and $\{x' \mid \phi_G(x')=y'\}$. \label{fig:dist}}
\end{minipage}
\end{figure}
\fi

\section{Quotient Feature Spaces} \label{sec:qfs}

We provide a notion of a \textit{quotient feature space} (QFS), which is a key factor in connecting invariance / equivariance and generalization.
With a quotient space $\R^n/G$ with $G$, we consider a map 
\begin{align*}
    \phi_G: \mathbb{R}^n \to \mathbb{R}^n/G \mbox{~such~as~} \phi_G(x) = \{g \cdot x \mid ~\forall g \in G  \},
\end{align*}
named a \textit{quotient map}.
By the definition of $g$, such $\phi_G$ always exists.
With this notion, we define a QFS.
\begin{definition}[Quotient Feature Space]
    For a finite group $G$, a \textit{quotient feature space} is defined as
    \begin{align*}
        \Delta_G := \phi_G(I).
    \end{align*}
\end{definition}
We can regard a QFS as a feature space with $G$.
We prove that a QFS can equip a distance if $g \in G$ preserves a distance in $I$, which is a fundamental property of feature spaces.
\begin{proposition}[Distance on QFS] \label{prop:dist}
For a finite group $G$, we define a function $d_G: \R^n/G \times \R^n/G \to \R_{\geq 0}$ as
\begin{align*}
    d_G(y, y') = \inf \{ \|x- x'\|_2 \mid \phi_G(x)=y, \phi_G(x')=y'\}.
\end{align*}
Then, $d_G$ is a distance on $\mathbb{R}^n/G$. 
\end{proposition}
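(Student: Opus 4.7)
The plan is to verify the three defining properties of a metric: (i) non-negativity with $d_G(y,y')=0 \iff y=y'$, (ii) symmetry, and (iii) the triangle inequality. Throughout I would use two structural facts: since $G\le S_n$ acts on $\mathbb{R}^n$ by coordinate permutation, every $g\in G$ is an isometry with respect to $\|\cdot\|_2$; and since $G$ is finite, each fiber $\phi_G^{-1}(y)=\{g\cdot x\mid g\in G\}$ is a finite set, so the infimum defining $d_G$ is attained and can be rewritten as $d_G(y,y')=\min_{g,g'\in G}\|g\cdot x - g'\cdot x'\|_2$ for any chosen representatives $x\in\phi_G^{-1}(y)$, $x'\in\phi_G^{-1}(y')$.

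Non-negativity is immediate from the non-negativity of $\|\cdot\|_2$. For the indiscernibility direction, if $y=y'$ pick any $x$ in the fiber and use the pair $(x,x)$ to see $d_G(y,y')\le 0$. Conversely, if $d_G(y,y')=0$, finiteness of the orbits lets us extract $x,x'$ with $\phi_G(x)=y$, $\phi_G(x')=y'$ and $\|x-x'\|_2=0$, so $x=x'$, hence $y=\phi_G(x)=\phi_G(x')=y'$. Symmetry is trivial, since swapping the roles of $x$ and $x'$ in the defining infimum does not change $\|x-x'\|_2$.

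The interesting step is the triangle inequality $d_G(y,y'')\le d_G(y,y')+d_G(y',y'')$. I would use the finiteness of orbits to pick minimizers: choose $x_1,x_1'$ with $\phi_G(x_1)=y$, $\phi_G(x_1')=y'$ and $\|x_1-x_1'\|_2=d_G(y,y')$, and choose $x_2',x_2''$ with $\phi_G(x_2')=y'$, $\phi_G(x_2'')=y''$ and $\|x_2'-x_2''\|_2=d_G(y',y'')$. The two middle representatives $x_1'$ and $x_2'$ sit in the same orbit, so there exists $g\in G$ with $g\cdot x_1'=x_2'$. Applying $g$ to the first pair gives new representatives $(g\cdot x_1,g\cdot x_1')=(g\cdot x_1,x_2')$ of $(y,y')$, and because $g$ is an isometry $\|g\cdot x_1-x_2'\|_2=\|x_1-x_1'\|_2=d_G(y,y')$. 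Now $(g\cdot x_1,x_2'')$ is an admissible pair for the infimum defining $d_G(y,y'')$, so by the triangle inequality in $\mathbb{R}^n$,
\begin{align*}
d_G(y,y'') \le \|g\cdot x_1-x_2''\|_2 \le \|g\cdot x_1-x_2'\|_2 + \|x_2'-x_2''\|_2 = d_G(y,y')+d_G(y',y'').
\end{align*}

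The only subtlety, and the step I would flag as the main obstacle, is precisely this alignment trick in the triangle inequality: the two infima that bound the right-hand side are achieved by a priori unrelated representatives of $y'$, and one must exploit both the transitivity of the $G$-action on each fiber and the fact that $G$ acts isometrically to move the first minimizing pair onto the same representative of $y'$ used by the second pair without changing any distances. Once that observation is in place, the proof is a direct application of the ordinary Euclidean triangle inequality.
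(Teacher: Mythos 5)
Your proof is correct and rests on the same key idea as the paper's: use the transitivity of $G$ on each orbit together with the fact that $G$ acts by isometries to align the two representatives of the middle point $y'$, then invoke the Euclidean triangle inequality. The paper phrases this as collapsing a chain of intermediate points rather than directly verifying the triangle inequality for the two-point infimum, but the alignment trick is identical, and your write-up is in fact more explicit about checking all the metric axioms (the paper only addresses $d_G(y,y')=0\Rightarrow y=y'$).
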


Intuitively, the distance $d_G$ for $\R^n/G$ is an infimum of a sum of pairwise distances of points $\{x \mid \phi_G(x)=y\}$ and $\{x' \mid \phi_G(x')=y'\}$.
$g \in G$ maintains the distance when $G$ is a finite group.
We also remark that this proposition does not hold for some infinite groups.

\subsection{Volume Measurement of QFS}

We measure volume of $\Delta_G$, which is a critical factor for a generalization bound of invariant / equivariant DNNs.
We consider two cases: (i) the symmetric group $G=S_n$, and (ii) a finite group $G$.
We measure the volume of a set $\Omega$ using a \textit{covering number} $\gN_{\varepsilon,\infty}(\Omega) :=\inf\{N \mid \exists \{x_j\}_{j=1}^N \subset \Omega, \mbox{~s.t.~} \cup_{j=1}^N \{x \mid \|x-x_j\|_\infty \leq \varepsilon\} \supset \Omega\}$.

\subsubsection{Symmetric Group Case}

We begin with the symmetric group $G=S_n$.
It is convenient to study $S_n$ as a first step, because we can derive an explicit formulation of $\phi_{S_n}$ and $\Delta_{S_n}$.
With the case, an action $\sigma \in S_n$ is a permutation of indexes of $x = (x_1,...,x_n) \in I$.
For $i=1,...,n$, we define a map $\max_i(\{x_1,...,x_n\})$ which returns the $i$-th largest element of $\{x_1,...,x_n\}$.
\begin{proposition}[QFS of $S_n$]\label{prop:delta_sn}
    Define a set $\Delta \subset I$ as $\Delta := \left\{ x \in I \mid x_{1} \geq x_{2} \geq \cdots \geq x_{n}  \right\},$
    and a map $\phi: \R^n \to \R^n$ as
$ \phi(x) :=  (\max_1(\{x_1,..,x_n\}),...,\max_n(\{x_1,..,x_n\}))$.
    Then, we obtain $\phi(\Delta) \cong \Delta_{S_n}$.
\end{proposition}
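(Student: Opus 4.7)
The plan is to reduce the statement to a classical rearrangement-inequality argument. First, I would observe that every $x \in \Delta$ is already sorted in decreasing order, so $\phi$ acts as the identity on $\Delta$ and hence $\phi(\Delta) = \Delta$. The problem therefore becomes exhibiting a metric isomorphism $\Psi : \Delta \to \Delta_{S_n}$ between $(\Delta, \|\cdot\|_2)$ and $(\Delta_{S_n}, d_{S_n})$.

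The natural candidate is $\Psi(x) := \phi_{S_n}(x)$, i.e.\ send a sorted vector to its $S_n$-orbit. To check bijectivity, I would use the fact that every orbit $\phi_{S_n}(z)$ for $z \in I$ contains a unique sorted representative, obtained by applying $\phi$ to $z$; this gives an inverse to $\Psi$. So $\Psi$ is a bijection from $\Delta$ onto $\Delta_{S_n}$.

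The heart of the proof is showing that $\Psi$ is an isometry, i.e.\ that for all $x, x' \in \Delta$,
\begin{align*}
\|x - x'\|_2 \;=\; d_{S_n}\bigl(\Psi(x), \Psi(x')\bigr).
\end{align*}
By Proposition~\ref{prop:dist} and the fact that the fiber of $\phi_{S_n}$ over an orbit is exactly that orbit, the right-hand side equals
$\inf_{\sigma,\sigma' \in S_n} \|\sigma\cdot x - \sigma'\cdot x'\|_2$. Since permutations preserve $\|\cdot\|_2$, this simplifies to $\inf_{\pi \in S_n} \|x - \pi \cdot x'\|_2$. Expanding the square gives
\begin{align*}
\|x - \pi\cdot x'\|_2^2 \;=\; \|x\|_2^2 + \|x'\|_2^2 - 2\langle x, \pi\cdot x'\rangle,
\end{align*}
so the infimum is achieved when $\langle x, \pi\cdot x'\rangle$ is maximized. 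This is precisely the setting of the rearrangement inequality: since both $x$ and $x'$ are already sorted in decreasing order, the inner product is maximized at $\pi = \mathrm{id}$, giving $\inf_\pi \|x - \pi\cdot x'\|_2 = \|x - x'\|_2$.

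The only delicate step is justifying the reduction of $d_{S_n}$ to the infimum over a single permutation and invoking the rearrangement inequality correctly; everything else is essentially bookkeeping about orbits and sorted representatives. I do not expect a serious obstacle, because finiteness of $G = S_n$ means the infimum is actually a minimum and the $\ell_2$-invariance under permutations lets us collapse the two permutations $\sigma, \sigma'$ into one $\pi = \sigma^{-1}\sigma'$.
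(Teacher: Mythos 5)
Your proposal is correct and follows essentially the same route as the paper: observe that $\phi$ is the identity on $\Delta$, then show that the restriction of $\phi_{S_n}$ to $\Delta$ is a bijective isometry onto $\Delta_{S_n}$ via the uniqueness of sorted orbit representatives. In fact you are more careful than the paper on the isometry step---the paper merely asserts that ${\phi_{S_n}}_{\restriction_{\Delta}}$ is distance preserving, whereas your reduction of $d_{S_n}$ to $\inf_{\pi}\|x-\pi\cdot x'\|_2$ followed by the rearrangement inequality supplies the justification that is left implicit there.
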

Figure \ref{fig:domains} illustrates $\Delta_{S_n}$ for some $n$.
Intuitively, any element of $I$ corresponds to some element of $\Delta_{S_n}$ with an existing action $\sigma \in {S_n}$, namely, $I = \cup_{\sigma \in S_n} \left\{ \sigma \cdot x \mid x \in \Delta \right\}$ holds.
With the help of the explicit formulation of $\Delta_{S_n}$, we can measure its size.
Since $\Delta_{S_n} \subset I$ holds, we can measure its volume by the Euclidean distance as follows:
\begin{lemma}[Volume of $\Delta_{S_n}$] \label{lem:delta}
    There is a constant $C$ such that for small enough $\varepsilon >0$, we obtain
    \begin{align*}
        \gN_{\varepsilon,\infty}( \Delta_{S_n}) \leq C /  (n!~\varepsilon^{n} ).
    \end{align*}
\end{lemma}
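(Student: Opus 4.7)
By Proposition~\ref{prop:delta_sn}, $\Delta_{S_n}$ is isometric (via $\phi$) to the sorted fundamental domain $\Delta := \{x\in I : x_1 \geq \cdots \geq x_n\}$ viewed as a subset of $\R^n$, so it suffices to bound the $\|\cdot\|_\infty$-covering number of $\Delta$ there. The plan is to take a standard cubic $\varepsilon$-grid on the whole of $I$ and, exploiting the fact that $\Delta$ is a fundamental domain for the $S_n$-action, collapse the grid to a single representative per orbit by coordinate-sorting each grid point.

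Concretely, set $M := \lceil 1/(2\varepsilon) \rceil$ and let $\Lambda := \{(\varepsilon + 2\varepsilon i_1, \ldots, \varepsilon + 2\varepsilon i_n) : i_j \in \{0,\ldots,M-1\}\}$, the cube centers of a standard partition of $[0,2M\varepsilon]^n \supset I$ into $M^n$ cubes of side $2\varepsilon$; in particular $\Lambda$ is an $\varepsilon$-cover of $I$ in $\|\cdot\|_\infty$. For $g \in \Lambda$, let $g^\star \in \Delta$ denote the vector obtained by sorting the coordinates of $g$ in decreasing order. The key technical input is the classical fact that decreasing rearrangement is $1$-Lipschitz in $\|\cdot\|_\infty$:
\begin{align*}
\|u^\star - v^\star\|_\infty \leq \|u - v\|_\infty \quad \text{for all } u,v\in\R^n,
\end{align*}
a consequence of Mirsky's / Hardy--Littlewood--P\'olya's rearrangement inequality. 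For any $x \in \Delta$, some $g \in \Lambda$ satisfies $\|x - g\|_\infty \leq \varepsilon$; since $x$ is already sorted, $x^\star = x$, and contractivity gives $\|x - g^\star\|_\infty = \|x^\star - g^\star\|_\infty \leq \varepsilon$. Hence $\{g^\star : g \in \Lambda\}$ is a valid $\varepsilon$-cover of $\Delta$.

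To conclude, note that $g^\star$ depends only on the multiset of coordinates of $g$, so the number of distinct $g^\star$'s equals the number of $n$-element multisubsets of the $M$-element value set $\{\varepsilon, 3\varepsilon, \ldots, (2M-1)\varepsilon\}$, which is $\binom{M+n-1}{n}$. For $\varepsilon \leq 1/(2n)$ one has $M + n - 1 \leq 1/(2\varepsilon) + n \leq 1/\varepsilon$, hence
\begin{align*}
\gN_{\varepsilon,\infty}(\Delta_{S_n}) \;\leq\; \binom{M+n-1}{n} \;\leq\; \frac{(M+n-1)^n}{n!} \;\leq\; \frac{1}{n!\,\varepsilon^n},
\end{align*}
yielding the claim with $C = 1$. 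The main (and essentially only) obstacle is the $\ell_\infty$-contractivity of sorting; everything else is a transparent counting argument that directly exposes the factorial redundancy produced by the $S_n$-symmetry.
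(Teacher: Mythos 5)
Your proof is correct, but it takes a genuinely different route from the paper's. The paper reduces to rational $\varepsilon$, tiles $I$ with $\varepsilon$-cubes, separately counts the cubes meeting the boundary hyperplanes $\{x_{\sigma^{-1}(i)}=x_{\sigma^{-1}(i+1)}\}$ (showing via an injective coordinate-killing projection that there are only $O(\varepsilon^{-(n-1)})$ of them), and then uses the fact that $S_n$ acts freely on the remaining cubes so that only a $1/n!$ fraction of them lies in $\Delta$; the bound is assembled as interior count plus boundary count. You instead sort the centers of a standard grid, invoke the $\|\cdot\|_\infty$-contractivity of the decreasing rearrangement to show the sorted centers still cover $\Delta$, and count the distinct sorted points exactly as multisets, $\binom{M+n-1}{n}\le (M+n-1)^n/n!$. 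Your argument is shorter and cleaner: it avoids the rational-$\varepsilon$ limiting step, avoids the boundary bookkeeping entirely (the boundary cells are absorbed into the multiset count rather than estimated separately), and yields the explicit constant $C=1$ for $\varepsilon\le 1/(2n)$. What the paper's cube-orbit argument buys is that it is phrased in a form that transfers directly to the general finite-group case (Lemma~\ref{lem:covering_g}), whereas the sorting trick is specific to $S_n$ --- though since Lemma~\ref{lem:covering_g} is derived by decomposing $\tilde{\Delta}_G$ into isometric copies of $\Delta_{S_n}$, your version of Lemma~\ref{lem:delta} plugs in without change. One small technicality to patch: the paper's definition of $\gN_{\varepsilon,\infty}$ requires the covering centers to lie in the set itself, and your $g^\star$ can have coordinates up to $(2M-1)\varepsilon\le 1+\varepsilon$, hence slightly outside $I$; clipping each coordinate to $[0,1]$ preserves sortedness, is a coordinatewise contraction, and so returns the centers to $\Delta$ without increasing their $\|\cdot\|_\infty$-distance to any point of $\Delta$.
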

Lemma \ref{lem:delta} provides an important claim: the volume of $\Delta_{S_n}$ is proportional to $1/(n!)$, i.e., the volume significantly decreases with the increases in $n$. 
The term $\varepsilon^{-n}$ is usual for covering numbers, i.e., $\gN_{\varepsilon,\infty}( I) \leq C/\varepsilon^n$ holds, hence the factorial improvement by $1/(n!)$ comes from $S_n$-invariance.

\subsubsection{General Finite Group Case}

We consider a general finite group $G$ and its corresponding QFS, by studying $\Delta_G$ and measuring its covering volume.
We first prepare several notions.
For a group $G$, $|G|$ denotes its number of elements, named an \textit{order} of $G$.
For a subgroup $H  \subset G$, a set $\{g_1,..,g_K | g_k \in G\}$ is defined as a \textit{complete system of representatives of} $H\backslash G$  if $K=|G|/|H|$ and  $G= \cup_{k=1}^K H\cdot g_k$ hold.
For any $G$ and $H$, we can always find the complete system.
Also, we define $\Delta_k:=g_k \cdot \Delta_{S_n} $.
Then, we achieve the following result:
\begin{proposition}\label{prop:surj}
    Let $\{g_1,..,g_K | g_k \in S_n,  k=1,...,K\}$  be a complete system of representatives of  $G\backslash S_n$.
    Then, $\Delta_k\cong \Delta_{S_n}$ holds as metric spaces for all $k=1,...,K$.
    Furthermore,  its induced set
        $\tilde{\Delta}_G := \bigcup_{k=1}^{|S_n|/|G|} \Delta_k$
    satisfies $\phi_G (\tilde{\Delta}_G)= \Delta_G$.
\end{proposition}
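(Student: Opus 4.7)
The plan is to prove the two assertions separately. For $\Delta_k \cong \Delta_{S_n}$ as metric spaces, I would observe that every $g_k \in S_n$ acts on $\R^n$ by permuting coordinates, which is an orthogonal linear map and therefore an isometry with respect to $\|\cdot\|_2$. Consequently the restriction $g_k \colon \Delta_{S_n} \to g_k \cdot \Delta_{S_n} = \Delta_k$ is a bijection that preserves Euclidean distance, which is precisely the metric-space isomorphism required. Since this holds for each representative, the first assertion follows uniformly over $k = 1,\ldots,K$.

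For the identity $\phi_G(\tilde{\Delta}_G) = \Delta_G$, the inclusion $\phi_G(\tilde{\Delta}_G) \subset \Delta_G$ is immediate from $\tilde{\Delta}_G \subset I$ and the definition $\Delta_G = \phi_G(I)$. For the reverse inclusion, fix an arbitrary $x \in I$. By Proposition \ref{prop:delta_sn}, $\Delta_{S_n}$ is a fundamental domain for the $S_n$-action on $I$, so there exists $\tau \in S_n$ with $\tau \cdot x \in \Delta_{S_n}$. Using the coset decomposition $S_n = \bigcup_{k=1}^{K} G \cdot g_k$, write $\tau^{-1} = h \cdot g_k$ for some $h \in G$ and some index $k$, so that $\tau = g_k^{-1} h^{-1}$. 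Then $g_k^{-1} h^{-1} \cdot x \in \Delta_{S_n}$, which rearranges to $h^{-1} \cdot x \in g_k \cdot \Delta_{S_n} = \Delta_k \subset \tilde{\Delta}_G$. Since $h^{-1} \in G$, we have $\phi_G(h^{-1} \cdot x) = \phi_G(x)$, so $\phi_G(x) \in \phi_G(\tilde{\Delta}_G)$, which establishes the reverse inclusion.

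I expect the only genuine obstacle to be bookkeeping with the one-sided coset convention and with inverses: the paper takes $\{g_k\}$ as representatives of $G \backslash S_n$, meaning $S_n$ decomposes into left multiples $G \cdot g_k$, so the decomposition must be applied to $\tau^{-1}$ rather than to $\tau$ in order for the $g_k$ to land on the correct side when translating $\tau \cdot x \in \Delta_{S_n}$ into $h^{-1} \cdot x \in g_k \cdot \Delta_{S_n}$. No deeper input is needed beyond the fundamental-domain property already supplied by Proposition \ref{prop:delta_sn} and the isometric nature of coordinate permutations.
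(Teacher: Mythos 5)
Your proposal is correct and follows essentially the same route as the paper: the isometry of coordinate permutations gives $\Delta_k \cong \Delta_{S_n}$, and the surjectivity onto $\Delta_G$ is obtained by combining $I = \bigcup_{\sigma \in S_n} \sigma \cdot \Delta_{S_n}$ with the right-coset decomposition $S_n = \bigcup_k G \cdot g_k$ and the $G$-invariance of $\phi_G$. Your element $h^{-1}\cdot x$ is exactly the paper's $g_k \cdot z$, so the two arguments coincide up to the inverse bookkeeping you flag.
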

Proposition \ref{prop:surj} shows that we can describe $\Delta_G$ by $\tilde{\Delta}_G$ which is a combination of complete systems of representatives of $G\backslash S_n$.
Intuitively, we can define $\tilde{\Delta}_G$ by a union of several transformed $\Delta_{S_n}$. 

We describe an example with $n=3$ and $G=S_2$.
A complete system of representatives of $S_2\backslash S_3$ can be $\{g_1,g_2,g_3 \} \subset S_3$ such that $g_1$ is an identity, $g_2$ is a transposition of the $2$nd and $3$rd elements, and $g_3$ is a cyclic permutation.
In other words, we have $g_3\cdot 1 =2, g_3 \cdot 2 =3$, and $ g_3\cdot 3 =1$. 
Moreover, we have $\tilde{\Delta}_{S_2} = \Delta_{S_2} $.
Then, we can represent $\Delta_{S_2}$ by $\Delta_{g_k}$ with $k=1,2,3$ as $ \Delta_{S_2}= g_1 \cdot \Delta_{S_3} \cup g_2 \cdot \Delta_{S_3} \cup g_3 \cdot \Delta_{S_3}$.
According to Figure \ref{fig:delta_s2}, $ \Delta_{S_2}$ is a union of $\Delta_{S_3}( = g_1 \cdot \Delta_{S_3})$, reflected $\Delta_{S_3}~(=g_2 \cdot \Delta_{S_3})$, and rotated  $\Delta_{S_3}~(=g_3 \cdot \Delta_{S_3})$.

\begin{figure*}[t]
    \centering
    \includegraphics[width=0.98\hsize]{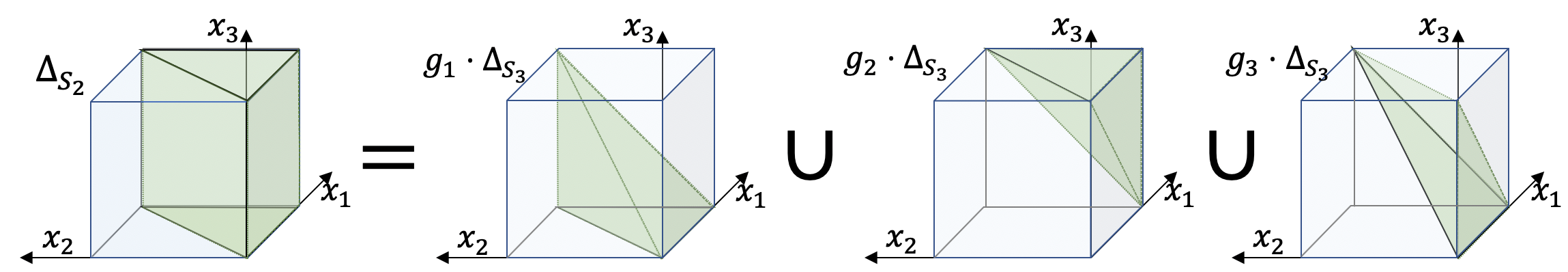}
    \caption{ Illustration  of $ \Delta_{S_2} = \tilde{\Delta}_{S_2} = g_1 \cdot \Delta_{S_3} \cup g_2 \cdot \Delta_{S_3} \cup g_3 \cdot \Delta_{S_3}$. The blue cube is $I$, and the green polyhedrons are ${\Delta}_{S_2}$ and $g_k \cdot \Delta_{S_3}, k=1,2,3$.}
    \label{fig:delta_s2}
\end{figure*}

We can evaluate the volume of $\Delta_G$ by $|G|$ as follows:
\begin{lemma} \label{lem:covering_g}
There exists a constant $C > 0$, such that for small enough $\varepsilon >0$, we obtain
    \begin{align*}
        \gN_{\varepsilon,\infty}( \Delta_G) \leq C/(|G|~\varepsilon^{n} ).
    \end{align*}
\end{lemma}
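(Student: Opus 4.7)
The plan is to combine the structural decomposition of $\tilde{\Delta}_G$ from Proposition~\ref{prop:surj} with the covering number bound for $\Delta_{S_n}$ given in Lemma~\ref{lem:delta}, via a union bound and a contraction argument for the quotient map.

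First, I would invoke Proposition~\ref{prop:surj} to write $\Delta_G = \phi_G(\tilde{\Delta}_G)$, where $\tilde{\Delta}_G = \bigcup_{k=1}^{K} \Delta_k$ with $K = |S_n|/|G| = n!/|G|$ and $\Delta_k = g_k \cdot \Delta_{S_n}$. For each piece $\Delta_k$, the action of $g_k \in S_n$ is simply a permutation of coordinates, hence an isometry for $\|\cdot\|_\infty$ (permuting entries preserves the maximum absolute value). Therefore $\Delta_k$ and $\Delta_{S_n}$ are isometric in $(\R^n,\|\cdot\|_\infty)$, and Lemma~\ref{lem:delta} gives
\begin{align*}
\gN_{\varepsilon,\infty}(\Delta_k) = \gN_{\varepsilon,\infty}(\Delta_{S_n}) \leq \frac{C}{n!\,\varepsilon^n}
\end{align*}
for every $k = 1,\dots,K$, with a universal constant $C>0$.

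Next, I would apply the standard union bound for covering numbers: taking the union of $\varepsilon$-covers over the $K$ pieces yields
\begin{align*}
\gN_{\varepsilon,\infty}(\tilde{\Delta}_G) \leq \sum_{k=1}^{K} \gN_{\varepsilon,\infty}(\Delta_k) \leq \frac{n!}{|G|}\cdot \frac{C}{n!\,\varepsilon^n} = \frac{C}{|G|\,\varepsilon^n}.
\end{align*}

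Finally, I would transfer this bound to $\Delta_G$ through the quotient map $\phi_G$. Because $d_G$ is defined as an infimum over preimages (cf.\ Proposition~\ref{prop:dist}), $\phi_G$ is $1$-Lipschitz with respect to the natural $\|\cdot\|_\infty$-analogue of $d_G$ on the quotient; consequently the image of any $\varepsilon$-cover of $\tilde{\Delta}_G$ is an $\varepsilon$-cover of $\Delta_G = \phi_G(\tilde{\Delta}_G)$, yielding $\gN_{\varepsilon,\infty}(\Delta_G) \leq \gN_{\varepsilon,\infty}(\tilde{\Delta}_G) \leq C/(|G|\,\varepsilon^n)$, as claimed.

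The main obstacle will be the last step: justifying cleanly that $\phi_G$ does not increase covering numbers in the chosen metric. Proposition~\ref{prop:dist} is stated for $\|\cdot\|_2$, but the same infimum-over-orbit-pairs construction works verbatim for $\|\cdot\|_\infty$ (and the action is still an isometry), so the transfer is routine modulo spelling this out. The rest is essentially a bookkeeping combination of Proposition~\ref{prop:surj} and Lemma~\ref{lem:delta}.
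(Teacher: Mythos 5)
Your proposal is correct and follows essentially the same route as the paper's proof: decompose via Proposition~\ref{prop:surj}, apply a union bound over the $|S_n|/|G|$ pieces isometric to $\Delta_{S_n}$, invoke Lemma~\ref{lem:delta}, and push the cover through the quotient map $\phi_G$. Your explicit remark about checking that the $d_G$ construction and the non-expansiveness of $\phi_G$ carry over to the $\|\cdot\|_\infty$ setting is a detail the paper leaves implicit, but the argument is the same.
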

Similar to Lemma \ref{lem:delta}, the result of Lemma \ref{lem:covering_g} states that the covering volume of $\Delta_G$ is improved by $|G|$.
Since $|S_n| = n!$ holds, Lemma \ref{lem:covering_g} is a generalization of Lemma \ref{lem:delta}.
Table \ref{tab:list_dnn} contains examples of $G$.

\subsection{Covering Numbers of QFS} \label{sec:func_corresp}

We show several technical inequalities to present a relationship between $G$-invariant DNNs and $\Delta_G$.
Namely, we show that a covering number of a set $\gF^{G}(I) = \{f : I \to \R^M \mid f~\mbox{is a $G$-invariant DNN.} \}$ is evaluated by comparison with the volume of $\Delta_G$.
We also define $\gF(\Delta_G) := \{f : \Delta_G \to \R^M \mid f~\mbox{is a DNN.} \}$.
We note that $f \in \gF(\Delta_G)$ is an ordinary DNN rather than a G-invariant DNN.

First, we derive a corresponding map between the two functional sets.

\begin{proposition}\label{prop:correspondence}
$\phi_G$ induces a bijection $\hat{\phi}_G: C(\Delta_G) \to  C^G(I)$.
Further, $f \in C(\Delta_G)$ is  $K$-Lipschitz continuous if and only if $\hat{\phi}_G(f)$ is $K$-Lipschitz continuous.
\end{proposition}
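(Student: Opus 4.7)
The plan is to define the map by precomposition, $\hat{\phi}_G(f) := f \circ \phi_G$, and verify well-definedness, bijectivity, and the Lipschitz equivalence in sequence. For well-definedness, note that $\phi_G$ is $1$-Lipschitz with respect to $d_G$, since $d_G(\phi_G(x), \phi_G(x')) \leq \|x - x'\|_2$ is immediate from Proposition \ref{prop:dist} (the singleton pair $(x,x')$ is a valid candidate in the infimum). Hence $f \circ \phi_G$ is continuous whenever $f$ is, and it is $G$-invariant because $\phi_G(g \cdot x) = \phi_G(x)$. Injectivity of $\hat{\phi}_G$ is then immediate from surjectivity of $\phi_G$ onto $\Delta_G$: if $f \circ \phi_G = f' \circ \phi_G$, evaluating at any $x \in \phi_G^{-1}(y)$ gives $f(y) = f'(y)$.

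For surjectivity, given $h \in C^G(I)$, I would define $f: \Delta_G \to \R^M$ by picking any representative $x \in \phi_G^{-1}(y)$ and setting $f(y) := h(x)$; this is well-defined because any two preimages of $y$ differ by the action of some $g \in G$ and $h$ is $G$-invariant. The substantive check is continuity of $f$ in the metric $d_G$: given $y, y' \in \Delta_G$, the finiteness of $G$ ensures $\phi_G^{-1}(y)$ and $\phi_G^{-1}(y')$ are finite sets, so the infimum defining $d_G(y,y')$ is attained by some pair $(x,x')$, and continuity of $h$ at $x$ then controls $\|f(y) - f(y')\| = \|h(x) - h(x')\|$. This reduces continuity of $f$ to continuity of $h$.

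The Lipschitz equivalence follows from the same infimum mechanism. If $f$ is $K$-Lipschitz on $\Delta_G$, then
\begin{align*}
\|(\hat{\phi}_G f)(x) - (\hat{\phi}_G f)(x')\|
 &= \|f(\phi_G(x)) - f(\phi_G(x'))\| \\
 &\leq K\, d_G(\phi_G(x), \phi_G(x')) \\
 &\leq K \|x - x'\|_2,
\end{align*}
so $\hat{\phi}_G(f)$ is $K$-Lipschitz. Conversely, if $h = \hat{\phi}_G(f)$ is $K$-Lipschitz on $I$, then for any representatives $x \in \phi_G^{-1}(y)$ and $x' \in \phi_G^{-1}(y')$ we have $\|f(y) - f(y')\| = \|h(x) - h(x')\| \leq K \|x - x'\|_2$; taking the infimum over all such pairs yields $\|f(y) - f(y')\| \leq K\, d_G(y, y')$. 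The main obstacle I anticipate is the continuity step in the surjectivity direction, namely promoting a fiber-wise set-theoretic definition of $f$ to continuity in the quotient metric; this is precisely where the finiteness of $G$ is used essentially, since it makes each fiber finite, guarantees the infimum in $d_G$ is attained, and lets us align representatives of nearby orbits without appealing to compactness or limiting arguments.
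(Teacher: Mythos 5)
Your proof is correct and follows essentially the same route as the paper's: define $\hat{\phi}_G$ by precomposition, get injectivity from surjectivity of $\phi_G$, construct the inverse fiber-wise using $G$-invariance, and derive both Lipschitz directions from the inequality $d_G(\phi_G(x),\phi_G(x')) \leq \|x-x'\|_2$ and the infimum defining $d_G$. You are in fact slightly more careful than the paper in explicitly verifying continuity of the fiber-wise-defined inverse (via finiteness of the fibers), a step the paper leaves implicit.
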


Using the corresponding map, we evaluate the volume of $ \gF^{G}(I)$ by $\gF(\Delta_G)$.
The following result presents the claim.
\begin{proposition} \label{prop:covering_bound_g}
    For any $\varepsilon > 0$, we obtain
    \begin{align*}
        \gN_{\varepsilon,\infty}( \gF^{G}(I)) \leq \gN_{\varepsilon,\infty}( \gF(\Delta_G)).
    \end{align*}
\end{proposition}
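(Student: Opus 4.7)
The plan is to transfer an optimal $\varepsilon$-cover of $\gF(\Delta_G)$ through the bijection $\hat{\phi}_G$ of Proposition \ref{prop:correspondence} to obtain an $\varepsilon$-cover of $\gF^G(I)$ of no larger cardinality. The argument decomposes into a metric step and a structural step.

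First, I would verify that $\hat{\phi}_G$ is an isometry with respect to $\|\cdot\|_{L^\infty}$. Since $\hat{\phi}_G(f) = f \circ \phi_G$ by construction in Proposition \ref{prop:correspondence}, and since $\phi_G(I) = \Delta_G$ by the definition of the quotient feature space, for any $f, f' \in C(\Delta_G)$ we have
\[
\|\hat{\phi}_G(f) - \hat{\phi}_G(f')\|_{L^\infty(I)} = \sup_{x \in I} |f(\phi_G(x)) - f'(\phi_G(x))| = \sup_{y \in \Delta_G}|f(y)-f'(y)| = \|f - f'\|_{L^\infty(\Delta_G)}.
\]
The same argument gives that $\hat{\phi}_G^{-1}$ is an isometry in the other direction.

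Second, I would show that $\hat{\phi}_G^{-1}(\gF^G(I)) \subseteq \gF(\Delta_G)$, i.e.\ that the bijection sends $G$-invariant DNNs on $I$ to DNNs on $\Delta_G$. Identifying $\Delta_G$ with a fundamental domain inside $I$ (as realised explicitly in Proposition \ref{prop:delta_sn} for $G = S_n$ and combinatorially in Proposition \ref{prop:surj} for general finite $G$), the map $\hat{\phi}_G^{-1}$ is nothing but restriction: if $h \in \gF^G(I)$ and $y \in \Delta_G$, then $\hat{\phi}_G^{-1}(h)(y) = h(y)$ for any chosen representative $y \in I$. Because $h$ is a ReLU DNN of the form \eqref{def:dnn}, which is defined pointwise on $\R^n$, its restriction to $\Delta_G$ has the same layered ReLU-plus-affine expression and therefore belongs to $\gF(\Delta_G)$.

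Combining these two steps, let $N = \gN_{\varepsilon,\infty}(\gF(\Delta_G))$ and fix an optimal cover $\{f_1,\ldots,f_N\} \subseteq \gF(\Delta_G)$. For any $h \in \gF^G(I)$, set $f_h := \hat{\phi}_G^{-1}(h) \in \gF(\Delta_G)$, pick $i$ with $\|f_h - f_i\|_{L^\infty(\Delta_G)} \leq \varepsilon$, and apply the isometry to conclude $\|h - \hat{\phi}_G(f_i)\|_{L^\infty(I)} \leq \varepsilon$. Hence $\{\hat{\phi}_G(f_i)\}_{i=1}^N$ is an $\varepsilon$-cover of $\gF^G(I)$, proving the stated inequality. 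The main obstacle is the second step: one must be careful that the identification of the abstract quotient space $\Delta_G \subset \R^n/G$ with a subset of $I$ is compatible with the DNN structure, so that restriction genuinely produces an element of $\gF(\Delta_G)$ rather than merely a continuous function. Once this identification is fixed through the geometry built up in Section \ref{sec:qfs}, the rest of the argument is essentially formal.
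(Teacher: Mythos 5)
Your proposal is correct and follows essentially the same route as the paper: transfer an $\varepsilon$-cover of $\gF(\Delta_G)$ through the bijection $\hat{\phi}_G$ of Proposition \ref{prop:correspondence}, using that composition with the surjection $\phi_G$ preserves $L^\infty$ distances (you note the sharper fact that this is an isometry, where the paper only uses the inequality $\|f_1'\circ\phi_G - f_2'\circ\phi_G\|_{L^\infty(I)} \leq \|f_1'-f_2'\|_{L^\infty(\Delta_G)}$). The only substantive difference is in justifying that $\hat{\phi}_G^{-1}$ sends $\gF^G(I)$ into $\gF(\Delta_G)$: the paper invokes Lipschitz preservation together with a representability result of Zhang et al., whereas you identify $\hat{\phi}_G^{-1}$ with restriction to a fundamental domain and observe that restricting a ReLU network yields a ReLU network --- an arguably cleaner argument that matches what the paper itself does in Proposition \ref{prop:bijDNN} for $G=S_n$.
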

This inequality shows that the set of $G$-invariant DNNs on $I$ is bounded by the volume of the set of DNNs on $\Delta_G$ \textit{without} invariance.

Finally, we evaluate the volume of $\gF(\Delta_G)$ in terms of the volume of $\Delta_G$.
We provide an inequality which bounds the covering number of $\gF(\Delta_G)$ by a \textit{polynomial} of the volume of $\Delta_G$, whereas the commonly used inequality only includes the \textit{logarithm} of the volume of $\Delta_G$, such as the result in Section 10.2 in \cite{anthony2009neural}.
\begin{proposition} \label{prop:covering_bound_delta}
    Suppose that any function in $\gF(\Delta_G)$ is $C_\Delta$-Lipschitz continuous and uniformly bounded by $B$ with constants $C_\Delta, B > 0$.
    Then, with an existing constant $c > 0$ and $C$ in Lemma \ref{lem:delta}, for any $\delta > 0$, we obtain
    \begin{align*}
        &\log \gN_{2 C_\Delta \delta,\infty}( \gF(\Delta_G) )\leq  \gN_{\delta,\infty}( \Delta_{S_n})\log  (8c^2B/\delta).
    \end{align*}
\end{proposition}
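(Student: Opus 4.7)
The plan is to apply the classical two-step metric entropy bound for uniformly bounded and uniformly Lipschitz function classes: discretize the domain $\Delta_G$, quantize the outputs in $[-B,B]^M$, then convert the resulting finite indexing into a sup-norm cover via the Lipschitz hypothesis. The three ingredients are the covering estimate of Lemma~\ref{lem:delta}, the surjection of Proposition~\ref{prop:surj} (to pass from $\Delta_G$ to $\Delta_{S_n}$), and the assumed regularity of $\gF(\Delta_G)$.

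First, I would fix a $\delta$-net $\{x_1,\dots,x_N\}\subset\Delta_G$ in $\ell_\infty$ of minimal cardinality $N=\gN_{\delta,\infty}(\Delta_G)$. Proposition~\ref{prop:surj} exhibits $\Delta_G$ as the $\phi_G$-image of a union of $|S_n|/|G|$ isometric copies of $\Delta_{S_n}$, so pulling a $\delta$-net of $\Delta_{S_n}$ back through the $g_k$ bounds $N$ in terms of $\gN_{\delta,\infty}(\Delta_{S_n})$ up to an absolute constant supplied by Lemma~\ref{lem:delta}; this is where the factor $c$ in the statement originates. Second, I would quantize outputs: since every $f\in\gF(\Delta_G)$ satisfies $\|f(x_j)\|_\infty\leq B$, the evaluation tuple $(f(x_1),\dots,f(x_N))$ lies in $[-B,B]^{NM}$, and a uniform $\ell_\infty$-grid of mesh $\eta := C_\Delta \delta$ covers this box with at most $(2B/\eta+1)^{NM}$ cells. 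Assigning each $f$ to the unique grid cell containing its tuple partitions $\gF(\Delta_G)$ into that many equivalence classes.

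Third, I would convert this into a sup-norm cover of $\gF(\Delta_G)$. For each class I take the center to be the piecewise-constant function $\tilde f$ on the Voronoi cells of the net, defined by $\tilde f\equiv \tilde v_j$ on the cell of $x_j$, where $(\tilde v_1,\dots,\tilde v_N)$ is the grid tuple. For any $f$ in the class and any $x\in\Delta_G$ with nearest net point $x_j$, the triangle inequality and $C_\Delta$-Lipschitzness give
\begin{align*}
\|f(x)-\tilde f(x)\|_\infty \leq \|f(x)-f(x_j)\|_\infty + \|f(x_j)-\tilde v_j\|_\infty \leq C_\Delta \delta + \eta = 2 C_\Delta \delta.
\end{align*}
Taking logarithms produces $\log \gN_{2C_\Delta\delta,\infty}(\gF(\Delta_G)) \leq N M \log(2B/\eta + 1)$, and substituting the bound from the first step together with the constants from Lemma~\ref{lem:delta} yields the stated inequality once $M$, $C_\Delta$, and the numerical factors are absorbed into the $8c^2$ inside the logarithm.

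The main obstacle I expect is the constant bookkeeping rather than any new mathematical idea: tracking the combinatorial factor $|S_n|/|G|$ that enters when comparing $\gN_{\delta,\infty}(\Delta_G)$ to $\gN_{\delta,\infty}(\Delta_{S_n})$ via Proposition~\ref{prop:surj} and Lemma~\ref{lem:delta}, and using piecewise-constant Voronoi centers (rather than comparing two arbitrary members of a class, which would yield a loose $4C_\Delta\delta$ radius) to tighten the radius to the stated $2C_\Delta\delta$. The reason I favor a polynomial-in-$\gN_{\delta,\infty}(\Delta_{S_n})$ bound rather than the usual logarithmic dependence is that the Lipschitz constant and the boundedness constant together make the $\log(1/\delta)$ factor unavoidable, and pushing the net cardinality outside the logarithm is what makes the QFS volume bound meaningful in the subsequent generalization theorem.
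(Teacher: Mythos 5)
Your proof is correct and follows essentially the same route as the paper's: a $\delta$-net of the domain whose cardinality is controlled via Lemma~\ref{lem:delta} and Proposition~\ref{prop:surj}, quantization of the function values on that net, and the $C_\Delta$-Lipschitz hypothesis to upgrade agreement on the net to a $2C_\Delta\delta$ sup-norm cover. The only deviation is in counting the quantized tuples --- you count the full grid on $[-B,B]^{N}$ directly, whereas the paper orders the net points so that consecutive ones are $2\delta$-close and counts incrementally ($O(B/\delta)$ choices for the first value, $O(1)$ for each subsequent one) --- but both counts are loosened to the same stated form $\gN_{\delta,\infty}(\Delta_{S_n})\log(8c^2B/\delta)$, so this is a cosmetic rather than substantive difference.
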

Combining this result with Proposition \ref{prop:covering_bound_g}, we can utilize $ \gN_{\varepsilon,\infty}( \Delta_{S_n})$ as the quantitative measure to evaluate the volume of the set of $G$-invariant DNNs.

\begin{remark}[Linear bound in $\gN_{\delta,\infty}( \Delta_{S_n})$]
In Proposition \ref{prop:covering_bound_delta}, it is important to note that logarithm of $\gN_{2 C_\Delta \delta,\infty}( \gF(\Delta_G))$ is linearly bounded by $\gN_{\delta,\infty}( \Delta_{S_n})$.
In general, log of $\gN_{2 C_\Delta \delta,\infty}( \gF(\Delta_G), )$ is bounded by a number of parameters of DNNs \citep{anthony2009neural} or parameter norms \citep{bartlett2017spectrally}. 
However, these values have little to do with invariance and therefore cannot give tight bounds.
We instead consider the volume of $\Delta_{S_n}$ as a value related to invariance and achieve the linear bound  in $\gN_{\delta,\infty}( \Delta_{S_n})$.
\end{remark}

\section{Generalization Bound for Invariant DNNs} \label{sec:gen_inv}

We derive a generalization bound with QFSs and show that invariance can effectively improve the generalization performance of DNNs.
Utilizing the results above, we have the following main result:

\begin{theorem}[Generalization of Invariant DNN] \label{thm:main1}
    Suppose any $f \in \gF^{G} = \gF^{G}(I)$ is uniformly bounded by $1$.
    Then, for any $\varepsilon > 0$, there exist a constant $C> 0$ that are independent of $n, m$ and $\varepsilon$, and the following inequality holds with probability at least $1-2\varepsilon$:
      \begin{align*}
        \gG (\gF^G) \leq \underbrace{\sqrt{\frac{C }{|G|~ m^{2/n}}}}_{=:I_1}+ \underbrace{\sqrt{\frac{ 2 \log (1/2\varepsilon)}{m}}}_{=:I_2}.
    \end{align*}
\end{theorem}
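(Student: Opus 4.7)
The plan is to combine a standard concentration-of-measure step with a covering-number argument, where the invariance enters entirely through the QFS volume estimate of Lemma~\ref{lem:covering_g}. First, because $L$ is bounded by $1$ and replacing any single training pair changes $R_m(f)$ by at most $2/m$ uniformly in $f \in \gF^G$, McDiarmid's bounded-differences inequality applied to $\Phi(\gF^G) := \sup_{f\in\gF^G}|R(f)-R_m(f)|$ yields
\[
\Phi(\gF^G) \leq \mathbb{E}[\Phi(\gF^G)] + \sqrt{\frac{2\log(1/(2\varepsilon))}{m}}
\]
with probability at least $1-\varepsilon$. This directly produces $I_2$ (with the factor $2\varepsilon$ in the logarithm coming from the two-sided form), reducing the task to bounding $\mathbb{E}[\Phi(\gF^G)]$.

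To control the expected supremum, I would fix a resolution $\delta>0$ and take a minimal $\delta$-cover $\{f_1,\dots,f_N\}\subset\gF^G(I)$ in $\|\cdot\|_\infty$, where $N = \gN_{\delta,\infty}(\gF^G(I))$. Every $f\in\gF^G$ is within $\delta$ of some $f_j$, so the $1$-Lipschitz property of $L$ gives $|R(f)-R_m(f)|\leq |R(f_j)-R_m(f_j)|+2\delta$. Applying Hoeffding to each $f_j$ with a union bound yields a master estimate of the form $\mathbb{E}[\Phi(\gF^G)] \lesssim \delta + \sqrt{\log N /m}$. The covering number $N$ is then controlled by chaining the three results of Section~\ref{sec:func_corresp}: Proposition~\ref{prop:covering_bound_g} transfers the cover from $\gF^G(I)$ to the unconstrained class $\gF(\Delta_G)$; Proposition~\ref{prop:covering_bound_delta} converts that into a bound \emph{linear} in the QFS covering number; and Lemma~\ref{lem:covering_g} estimates the latter by $C/(|G|\,\delta^{n})$, which is where the distinctive $1/|G|$ prefactor is born.

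To close the argument, I would choose the discretization scale $\delta$ on the order of $m^{-1/n}$ so that the Lipschitz-discretization error $2\delta$ and the Hoeffding term $\sqrt{\log N/m}$ fuse into a single $\sqrt{C/(|G|\,m^{2/n})}$ expression, matching $I_1$. Collecting this with the McDiarmid contribution from Step~1 yields the stated inequality with a universal constant independent of $n, m,\varepsilon$.

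The hardest part will be the scale choice in the covering step. A symmetric balance of $\delta$ against $\sqrt{\log N(\delta)/m}$ typically produces a slower rate of the form $(|G|m)^{-1/(n+2)}$; obtaining the cleaner product structure $|G|^{-1/2}\,m^{-1/n}$ of the theorem requires that the $1/|G|$ prefactor inherited from the QFS volume pass through the square root multiplicatively with $m^{-2/n}$, rather than being softened into a fractional exponent bundled with $m$. This is precisely what the linear-in-$\gN(\Delta_G)$ bound of Proposition~\ref{prop:covering_bound_delta} is designed to enable, but making it rigorous will require careful tracking of constants in the Hoeffding union bound and an asymmetric choice of $\delta$ that keeps the Lipschitz-discretization term as the dominant contribution.
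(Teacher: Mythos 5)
Your first step (McDiarmid for the deviation term $I_2$) and your use of the covering-number chain Proposition~\ref{prop:covering_bound_g} $\to$ Proposition~\ref{prop:covering_bound_delta} $\to$ Lemma~\ref{lem:covering_g} both match the paper. The divergence --- and the gap --- is in how the covering number is converted into a bound on the expected supremum. You use a \emph{single-scale} discretization: pick one $\delta$, pay a Lipschitz error $2\delta$, and pay a Hoeffding union-bound term $\sqrt{\log \gN_{\delta,\infty}(\gF^{G}(I))/m}$. This cannot produce $I_1$, for the reason you half-identify at the end but do not resolve. The discretization error $2\delta$ carries no $|G|$-dependence, so to make it at most $I_1=\sqrt{C/(|G|\,m^{2/n})}$ you must take $\delta\lesssim |G|^{-1/2}m^{-1/n}$; but then $\log \gN_{\delta,\infty}\gtrsim \frac{C}{|G|\delta^{n}}\gtrsim |G|^{n/2-1}m$ (up to constants), so the union-bound term $\sqrt{\log\gN_{\delta,\infty}/m}\gtrsim |G|^{n/4-1/2}$ \emph{grows} with $|G|$ for $n>2$ and does not decay in $m$. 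Conversely, the symmetric balance you mention gives the rate $(|G|m)^{-1/(n+2)}$, which is not the claimed one. No ``asymmetric choice of $\delta$'' escapes this dichotomy: either the $2\delta$ term or the union-bound term dominates and spoils the bound, so the final fusion step of your proof fails as stated.

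The missing idea is multi-scale chaining. The paper does not use a single cover at all: it invokes the Rademacher-complexity / Dudley entropy-integral bound (its inequality \eqref{ineq:basic_bound}, citing Lemma A.5 of Bartlett et al.), in which your single union-bound term is replaced by $\frac{12}{\sqrt m}\int_{\alpha}^{\sqrt m}\sqrt{2\log 2\gN_{\delta,\infty}(\gF^{G}(I))}\,d\delta$ and the free cutoff $\alpha$ plays the role of your finest resolution while the integral aggregates the contributions of all coarser scales. Substituting the bound $\log\gN_{\delta,\infty}\leq \frac{C}{|G|\delta^{n}}\log(8c^{2}B/\delta)$ into that integral and optimizing over $\alpha$ is how the paper extracts its leading term; if you want to repair your argument you should replace your Step~2 by this entropy-integral bound, since a single-resolution cover is structurally incapable of delivering a $|G|$-dependent prefactor together with the $m^{-1/n}$ rate.
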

The main term $I_1$ of the bound is interpreted to maintain the relation $I_1 \propto \sqrt{{\gN_{\varepsilon,\infty}( \Delta_G)}}$, hence the volume of QFSs describes the effect of invariance on the generalization error.
Obviously, $I_1$ is improved as  $\sqrt{|G|}$ increases.
Although the convergence rate of the main term in $m$ gets slow as $n$ increases, an increase in $\sqrt{|G|}$ reduces the error, as shown in the following specific example.
Here, we note that we can regard $I_2$ as a relatively negligible term.


With the case $G=S_n$, the result in Theorem \ref{thm:main1} yields a more explicit bound:
\begin{corollary}[Generalization of $S_n$-invariant DNN]\label{cor:sn}
    Consider the same setting as Theorem \ref{thm:main1}.
    Then, for any $\varepsilon > 0$, there exists a constant $C > 0$ and the following inequality holds with probability at least $1-2\varepsilon$: 
    \begin{align*}
    &\gG (\gF^{S_n}) \leq {\sqrt{\frac{C }{n!~m^{2/n}}}}+ {\sqrt{\frac{ 2 \log (1/2\varepsilon)}{m}}}.
     \end{align*} 
\end{corollary}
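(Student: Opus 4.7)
The plan is to obtain Corollary \ref{cor:sn} as a direct specialization of Theorem \ref{thm:main1} to $G = S_n$. First, I would verify that the hypotheses of the theorem apply: the symmetric group $S_n$ is a finite group acting on $\R^n$ by permutation of coordinates, so the framework of Section \ref{sec:qfs} applies verbatim; and $\gF^{S_n}$ is, by assumption, uniformly bounded by $1$. Hence Theorem \ref{thm:main1} can be invoked with $G = S_n$.

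Next, I would compute $|G| = |S_n| = n!$ and substitute into the main term $I_1 = \sqrt{C/(|G|\,m^{2/n})}$ of Theorem \ref{thm:main1}, which immediately yields $\sqrt{C/(n!\,m^{2/n})}$. The second term $I_2 = \sqrt{2 \log(1/2\varepsilon)/m}$ is independent of $G$ and therefore carries over unchanged. The probabilistic conclusion (with probability at least $1 - 2\varepsilon$) is inherited directly from the theorem.

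For this corollary the main obstacle is essentially absent: the derivation is a mechanical substitution once Theorem \ref{thm:main1} is in hand. The substantive work has already been done upstream, in particular in Lemma \ref{lem:delta} (which gives $\gN_{\varepsilon,\infty}(\Delta_{S_n}) \leq C/(n!\,\varepsilon^n)$ and is the source of the $n!$ improvement) and in Propositions \ref{prop:correspondence} and \ref{prop:covering_bound_delta}, which transfer covering bounds from the QFS to the function class. If one wanted a self-contained derivation bypassing the general theorem, the natural route would be to chain Lemma \ref{lem:delta} with Propositions \ref{prop:covering_bound_g} and \ref{prop:covering_bound_delta} to bound $\log \gN_{\varepsilon,\infty}(\gF^{S_n}(I))$ by $O(n!^{-1}\varepsilon^{-n} \log(1/\varepsilon))$, and then apply a standard Rademacher/Dudley chaining argument together with a McDiarmid-type concentration inequality. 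This alternative route would make transparent why the $\sqrt{n!}$ gain is inherited precisely from the volume shrinkage of $\Delta_{S_n}$ relative to $I = [0,1]^n$, while the $m^{-1/n}$ rate is the unavoidable cost of measuring complexity through an $n$-dimensional covering number rather than through parameter counts.
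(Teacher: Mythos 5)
Your proposal matches the paper's treatment exactly: Corollary \ref{cor:sn} is obtained by specializing Theorem \ref{thm:main1} to $G = S_n$ and substituting $|S_n| = n!$ into the main term, which is precisely the one-line derivation the paper intends. The additional remarks tracing the $n!$ back to Lemma \ref{lem:delta} and Propositions \ref{prop:covering_bound_g} and \ref{prop:covering_bound_delta} are consistent with the paper's proof of the theorem itself and add nothing that conflicts with it.
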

Corollary \ref{cor:sn} follows the order $|S_n| = n!$.
Since $n$ is large in practice, e.g., a number of points in point cloud data or a number of nodes for graph data, the term $n!$ significantly improves the bound.

\begin{remark}[Convergence rate in $m$] \label{remark:rate_m}
    In the result, the convergence rate to the sample size $m$ slows down as $n$ increases, but the improvement of the bound with increasing $n$ more than cancels this out.
    Since a decay by the factorial term $n!$ is sufficiently faster than any polynomial convergence in $n$. 
    For a practical example with an experiment ($m=9843, n=100$) by \cite{zaheer2017deep} with the ModelNet40 dataset \citep{wu20153d}, our bound $O(1/\sqrt{n! m^{2/n}}) \approx O(10^{-156})$ is \textit{significantly tighter} than an ordinary bound $O(1/\sqrt{m}) \approx O(10^{-1.99})$.
\end{remark}


\textit{Proof sketch for Theorem \ref{thm:main1}}:
We prove Theorem \ref{thm:main1} by the following three steps.

First, we apply the well-known Rademacher complexity bound (e.g., Lemma A.5 in \cite{bartlett2017spectrally}) and obtain the following inequality with probability at least $1-2\varepsilon$
\begin{align}
    &\gG(\gF^G) \leq \sqrt{\frac{  2 \log (1/2\varepsilon)}{m}}  \label{ineq:basic_bound} \\
  &+ \inf_{\alpha \geq 0} \left\{ {4\alpha}+ \frac{12}{\sqrt{m}} \int_\alpha^{\sqrt{m}}\sqrt{ 2 \log 2 \gN_{\delta,\infty}(\gF^{G}(I)) }d \delta \right\}. \notag
\end{align}
Second, we bound the term $\log \gN_{\delta,\infty}(\gF^{G}(I))$ in \eqref{ineq:basic_bound} by $\log \gN_{\delta,\infty}(\gF(\Delta_G))$ by using the result in Proposition \ref{prop:correspondence}.
This enables us to evaluate the error with $G$-invariance using $\Delta_G$.

Third, we bound $\log \gN_{\delta,\infty}(\gF(\Delta_G))$ by the term with $\gN_{\delta,\infty}( \Delta_G)$.
To achieve this bound for bounding the volume of functional sets by that of its domain, we provide Proposition \ref{prop:covering_bound_delta} in the supplementary material.
Then, we combine Lemma \ref{lem:covering_g} and get the statement of Theorem \ref{thm:main1}.
\qed

\section{Generalization Bound for Equivariant DNNs} \label{sec:gen_equiv}

We derive a generalization bound for equivariant DNNs.
To this aim, we require a covering number of the following set 
    $\tilde{\gF}^{G}(I) = \{\tilde{f} : I \to \R^n \mid \tilde{f}~\mbox{is a $G$-equivariant DNN.} \}$.

As preparation, we define a \textit{stabilizer subgroup} associated with $G$. 
In this section, for brevity, we consider that the action $G$ is transitive, i.e. for any $i \in \{1,2,...,n\}$, there exists $g \in G$ that satisfies $g\cdot 1 = i$.
We define the stabilizer subgroup $\mathrm{St}(G) \subset G$ as
    $\mathrm{St}(G) = \left\{g\in G \mid g\cdot 1 = 1 \right\}$.
Here, $\mathrm{St}(G)$ is a subgroup of $G$ which fixes the first coordinate.
We utilize this subgroup for decomposing equivariant functions and obtain the following bound:
\begin{theorem}[Generalization of Equivariant DNN] \label{thm:main1_equiv}
    Suppose $G$ is transitive, and any $\tilde{f}^G \in \tilde{\gF}^{G} = \tilde{\gF}^{G}(I)$ is uniformly bounded by $1$.
    Then, for any $\varepsilon > 0$, there exists constant $\tilde{C} > 0$ that are independent of $n,m$ and $\varepsilon$, the following inequality holds with probability at least $1-2\varepsilon$:
  \begin{align*}
        \gG(\tilde{\gF}^{G}) \leq  \underbrace{\sqrt{ \frac{ \tilde{C} }{|\mathrm{St}(G)|~ m^{2/n}}}}_{=: I'_1}+ \underbrace{\sqrt{\frac{2\log  ( 2/\varepsilon )}{m}}}_{=: I'_2}.
    \end{align*}
\end{theorem}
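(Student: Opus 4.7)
The plan is to reduce the equivariant case to the invariant case of Theorem \ref{thm:main1} via the stabilizer subgroup, so that the machinery of Section \ref{sec:func_corresp} can be recycled with $G$ replaced by $\mathrm{St}(G)$. The key structural observation is as follows: for any $\tilde{f}\in\tilde{\gF}^{G}(I)$ and any $g\in\mathrm{St}(G)$, the equivariance identity gives $\tilde{f}_1(g\cdot x)=\tilde{f}_{g^{-1}\cdot 1}(x)=\tilde{f}_1(x)$, so the first coordinate $\tilde{f}_1$ is an $\mathrm{St}(G)$-invariant scalar function. Because $G$ acts transitively, for each $i\in\{1,\dots,n\}$ we may fix $g_i\in G$ with $g_i\cdot 1=i$, and the remaining coordinates are determined by $\tilde{f}_i(y)=\tilde{f}_1(g_i^{-1}\cdot y)$. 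Since each $g_i^{-1}$ acts by a permutation of the coordinates and therefore as an isometric bijection of $I=[0,1]^n$, the first-coordinate map $\tilde{f}\mapsto\tilde{f}_1$ is a sup-norm isometry, so
\begin{align*}
\gN_{\delta,\infty}(\tilde{\gF}^{G}(I))\;\le\;\gN_{\delta,\infty}\bigl(\gF^{\mathrm{St}(G)}(I)\bigr),
\end{align*}
where the right-hand side is the covering number of the scalar $\mathrm{St}(G)$-invariant class obtained by projecting on the first coordinate.

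Next, I would apply the same chain of covering-number inequalities used in the proof of Theorem \ref{thm:main1}, but with $\mathrm{St}(G)$ in place of $G$: Proposition \ref{prop:correspondence} transfers the problem to $C(\Delta_{\mathrm{St}(G)})$, Proposition \ref{prop:covering_bound_g} bounds the invariant covering number by that on the QFS, Proposition \ref{prop:covering_bound_delta} replaces the latter by a polynomial in $\gN_{\delta,\infty}(\Delta_{\mathrm{St}(G)})$, and Lemma \ref{lem:covering_g} controls that volume by $C/(|\mathrm{St}(G)|\,\delta^n)$. Plugging the resulting entropy bound into the Dudley-type Rademacher inequality \eqref{ineq:basic_bound} and optimizing the cutoff $\alpha\asymp m^{-1/n}|\mathrm{St}(G)|^{-1/2}$ produces the leading term $I'_1=\sqrt{\tilde{C}/(|\mathrm{St}(G)|\,m^{2/n})}$, while the standard concentration step contributes $I'_2=\sqrt{2\log(2/\varepsilon)/m}$.

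The main technical obstacle is justifying the first-coordinate reduction rigorously for the vector-valued equivariant class: one must check that the induced first-coordinate functions are genuinely $\mathrm{St}(G)$-invariant, continuous and Lipschitz with the same constants (so that Proposition \ref{prop:covering_bound_delta} applies), and that the sup-norm covering of $\tilde{\gF}^G(I)$ equals that of its first-coordinate image. The latter is true because the permutations $g_i$ act isometrically on $I$, yet it should be written out carefully to confirm that no constants depending on $n$ slip in. A secondary subtlety is that the Dudley bound \eqref{ineq:basic_bound} was stated for scalar predictors; it extends to $\R^n$-valued $\tilde{f}$ because the $1$-Lipschitz uniformly bounded loss $L$ turns the composition $(X,Y)\mapsto L(\tilde{f}(X),Y)$ into a scalar class whose sup-norm covering is controlled by that of $\tilde{\gF}^{G}(I)$, so the entropy integral proceeds verbatim, giving the stated bound.
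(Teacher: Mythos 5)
Your proposal is correct and follows essentially the same route as the paper: the first-coordinate reduction you derive from equivariance plus transitivity is exactly the representation $\tilde{f}^G=(f^{\mathrm{St}(G)}\circ\tau_{1,1},\dots,f^{\mathrm{St}(G)}\circ\tau_{1,n})^{\top}$ that the paper imports from Proposition 3.1 of \cite{sannai2019universal} (formalized as Propositions \ref{prop:covering_equiv_transitive} and \ref{prop:equiv_general}), after which both arguments run the identical covering-number chain and Dudley integral with $\mathrm{St}(G)$ in place of $G$. Your explicit attention to the vector-valued-to-scalar loss composition is a point the paper leaves implicit, but it does not change the approach.
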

The result shows that equivariant DNNs also achieves the improved generalization bound by the volume of its QFS of $\mathrm{St}(G)$, i.e., the main term satisfies 
    $I_1' \propto \sqrt{{\gN_{\varepsilon,\infty}( \Delta_{\mathrm{St}(G)})}}$.
The remainder term ${I}_2'$ has a smaller order than $I'_1$. Thus, it is considered to be negligible in our analysis.

By Theorem \ref{thm:main1_equiv}, we obtain the following specific generalization bound with $G=S_n$.
\begin{corollary}[Generalization of $S_n$-equivariant DNN]\label{cor:sn_equiv}
    Consider the same setting as Theorem \ref{thm:main1}.
    Then, for any $\varepsilon > 0$ and sufficiently large $n$, the following inequality holds with probability at least $1-2\varepsilon$:
    \begin{align*}
        \gG(\tilde{\gF}^{S_n}) \leq {\sqrt{\frac{\tilde{C}}{(n-1)!~m^{2/n}}}}+ {\sqrt{\frac{ 2 \log (1/2\varepsilon)}{m}}}.
    \end{align*}
\end{corollary}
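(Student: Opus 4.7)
The plan is to deduce Corollary \ref{cor:sn_equiv} as a direct specialization of Theorem \ref{thm:main1_equiv} to the symmetric group $G = S_n$. The route is short: verify the two hypotheses of the theorem for $S_n$ (transitivity and the uniform bound, which is inherited from the same setting as Theorem \ref{thm:main1}), identify the stabilizer subgroup $\mathrm{St}(S_n)$, compute its order, and substitute into the bound.

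First, I would check transitivity. For any $i \in \{1,\dots,n\}$, the transposition $(1\;i) \in S_n$ sends $1$ to $i$, so $S_n$ acts transitively on $\{1,\dots,n\}$ in the sense required by the definition $\mathrm{St}(G) = \{g\in G \mid g\cdot 1 = 1\}$ preceding Theorem \ref{thm:main1_equiv}. The uniform boundedness assumption on $\tilde{\gF}^{S_n}$ carries over verbatim from the setting inherited from Theorem \ref{thm:main1}.

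Next, I would identify the stabilizer. By definition, $\mathrm{St}(S_n)$ consists of those permutations $\sigma \in S_n$ with $\sigma(1) = 1$, which is precisely the subgroup of permutations of $\{2,\dots,n\}$. Hence $\mathrm{St}(S_n) \cong S_{n-1}$, and in particular $|\mathrm{St}(S_n)| = (n-1)!$. Plugging this into the main term of Theorem \ref{thm:main1_equiv} yields
\begin{align*}
    I_1' = \sqrt{\frac{\tilde{C}}{|\mathrm{St}(S_n)|\, m^{2/n}}} = \sqrt{\frac{\tilde{C}}{(n-1)!\, m^{2/n}}},
\end{align*}
and the residual term $I_2'$ is already in the required form (up to the cosmetic rewriting of $\log(2/\varepsilon)$ as $\log(1/2\varepsilon)$, which is absorbed into the generic constant or handled by relabelling $\varepsilon$). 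Combining with $I_2'$ and applying Theorem \ref{thm:main1_equiv} gives the stated bound with probability at least $1-2\varepsilon$.

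There is no genuine obstacle here; the only subtlety worth spelling out is that the ``sufficiently large $n$'' qualifier is only needed to absorb the constants and to ensure the covering-number inequalities used implicitly (via Lemma \ref{lem:covering_g} applied to $\mathrm{St}(S_n)$) hold in the regime where $\varepsilon$ can be chosen small enough relative to the diameter of $\Delta_{\mathrm{St}(S_n)}$. The constant $\tilde{C}$ in the corollary is the same constant appearing in Theorem \ref{thm:main1_equiv}, since the dependence on the group enters only through the factor $|\mathrm{St}(G)|$ that we have explicitly computed.
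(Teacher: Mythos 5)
Your proposal matches the paper's own proof, which likewise obtains the corollary by specializing Theorem \ref{thm:main1_equiv} to $G=S_n$, noting transitivity and that $|\mathrm{St}(S_n)|=|S_{n-1}|=(n-1)!$. The extra details you supply (the transposition witnessing transitivity, the remark on the $\log(2/\varepsilon)$ versus $\log(1/2\varepsilon)$ discrepancy) are harmless elaborations of the same one-line argument.
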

This corollary describes that $S_n$-equivariant DNNs can be improved bound by $\sqrt{(n-1)!}$.
In Section \ref{sec:equiv_without_transitive} in the supplementary material, we relax the transitive setting for $G$ and provide more general results with non-transitive $G$.

Even with this result, the slow decay rate in $m$ is resolved by the improvement of the bound by $n$ due to invariance.
The detailed discussion is similar to that of Remark \ref{remark:rate_m}.

\textbf{Proof sketch for Theorem \ref{thm:main1_equiv}}:
As a preparation, we define a set of $G$-equivariant functions with multivariate outputs and their covering numbers.
To this end, we reform the $G$-equivariant function $\tilde{f}^G: I \to \R^n$ to a combination of $\mathrm{St}(G)$-invariant functions.
Proposition 3.1 in \cite{sannai2019universal} shows the following formulation:
\begin{align}
    \tilde{f}^G = (f^{\mathrm{St}(G)} \circ \tau_{1,1},  \cdots, f^{\mathrm{St}(G)}\circ \tau_{1,n})^\top, \label{rep:equiv}
\end{align}
where $f^{\mathrm{St}(G)}:I \to \R$ is an existing $\mathrm{St}(G)$-invariant function, and $\tau_{1,j} \in G$ is a linear map for $j = 1,..,n$ such that it makes the first coordinate of an input move to the $j$-th coordinate.
The detailed results are provided in Proposition \ref{prop:relation-invariant-equivariant} in Section \ref{sec:equiv_without_transitive}.
By the representation, we can evaluate a covering number of $\tilde{\gF}^{G}(I)$ by that of $\mathrm{St}(G)$-invariant functions.
For a multi-output function ${f} : I \to \R^n$ as ${f}=(f_1,...,f_n)$, we define a norm $\vertiii{ {f}}_{L^\infty(I)} := \max_{j = 1,...,n} \|f_j\|_{L^\infty(I)}$.
Also, let $\tilde{\gN}_{\varepsilon,\infty} (\Omega)$ be a covering number of $\Omega$ in terms of $\vertiii{\cdot}_{L^\infty(I)}$.

The remaining steps of this proof are similar to those of Theorem \ref{thm:main1}.
\qed

\section{Experimental Result}


We experimentally validate Theorem~\ref{thm:main1} by measuring a generalization gap with synthetic data.
We consider a regression task to find a sum of $n$ scalars, which is a problem solvable by invariant functions.

We generate synthetic data by the following process.
For inputs, we generate $N=nd$ random variables $x_1,...,x_N$  that are independently and identically generated from a standard normal distribution. 
We generate an output variable $y = \sum_{i=1}^{N}x_i$.
We regard this as an $S_n$-invariant function in the following way. We regard $x_1,...,x_N$ as $n$ $d$-dimensional vectors $v_1,...,v_n$ and then we can give the permutation action of $S_n$ on $(v_1,...,v_n)$. This induces the action of $S_n$ on $(x_1,...,x_N)$.
A function $(x_1,...,x_N) \mapsto y = \sum_{i=1}^{N}x_i$ is invariant to the permutation actions of $S_n$.

We solve the regression problem by \textit{DeepSets} \citep{zaheer2017deep}, which is an $S_n$-invariant DNN with given $n$. 
DeepSets consists of $S_n$-equivariant layers (the first three layers), an $S_n$-invariant layer, and a fully connected layer (the last layer).
A number of units of each layer is as follows: $N \rightarrow 128 \rightarrow 64 \rightarrow 32\rightarrow 32 \rightarrow 1$.

We vary $n=2,4,6,8$ and set $N=48$, then we consider configurations $(n,d) \in \{(2,24), (4,12), (6,8), (8,6)\}$ such as satisfying $N=nd$.
We generate $m=60$ samples for training and $10000$ samples for testing.  
We train \textit{DeepSets} with $500$ epochs, batch size $4$, learning rate $0.001$, and the Adam optimizer.

Figure~\ref{fig:result} illustrates the result, which shows the mean over five trials with different random seeds.\footnote{A reviewer suggested that the experimental result should be expressed in a table. 
However, since the table is not suitable to express the slope of the gaps with respect to $n$, we continue to use the figure.}  
The horizontal axis shows $n$ and the vertical
axis shows a logarithm of the generalization gaps.
From the result, we can confirm two things.
First, the theoretical bound is a certain upper bound of the experimental value by DeepSets.
Second, the slope of the
experimental value is same to the theoretical slope. 
This supports our claim that the degree of invariance $n$ reduces the generalization gap.



 \begin{figure}
     \centering 
     \includegraphics[width=0.9 \hsize]{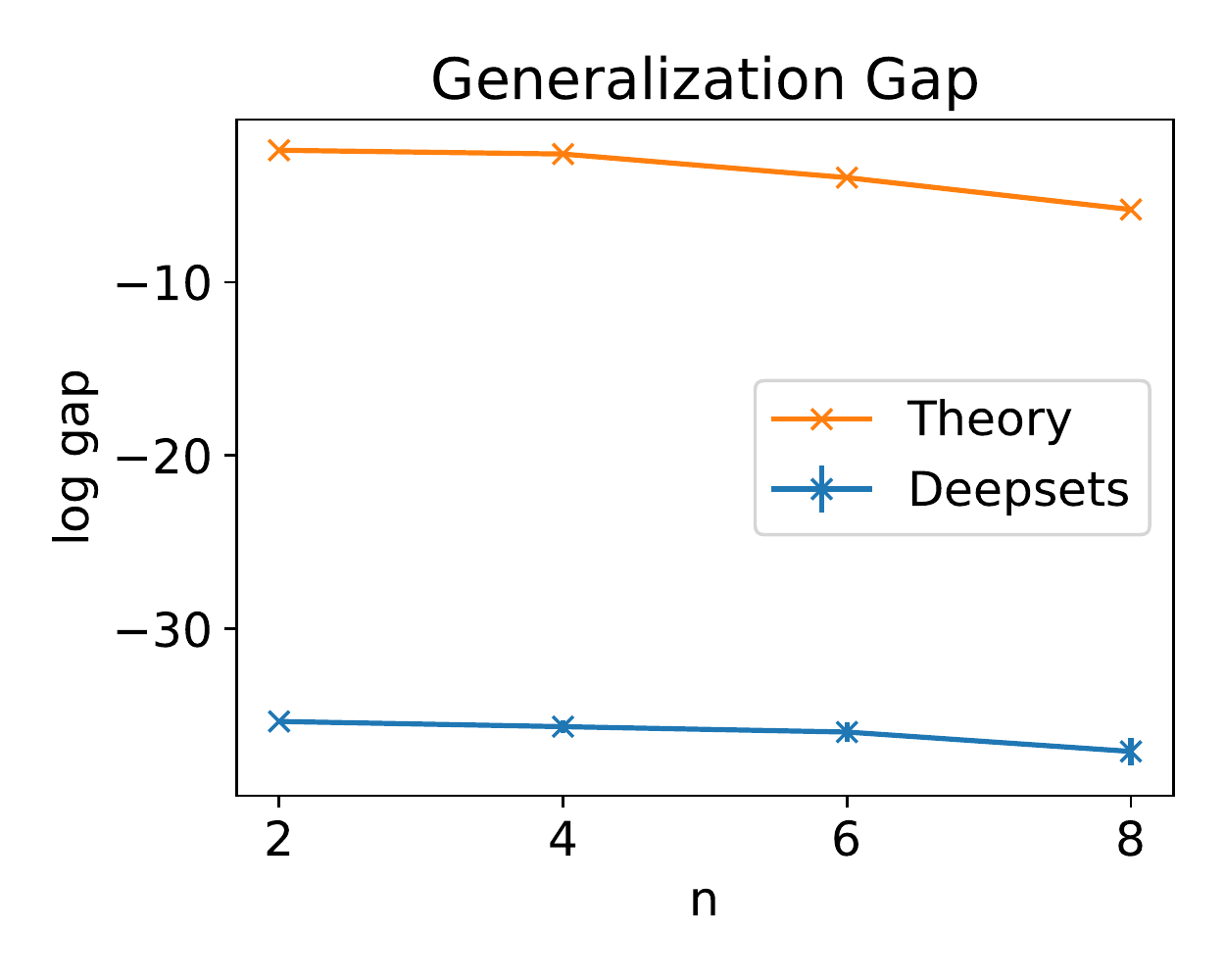} 
     \caption{Generalization Gap. Theory bound v.s. experimental result.  \textit{Theory} (orange line) denotes $( 1/(\sqrt{n! m^{2/n}}))$ , and \textit{Deepsets} (blue line) denote the generalization gaps in the experiments with $n \in \{2,  4, 6, 8\}$.}
     \label{fig:result}
 \end{figure}

\section{Discussion and Comparison} \label{sec:comparison}

\subsection{Effect of Invariance / Equivariance}

We identify the effect of invariance / equivariance on the generalization bounds of DNNs.
With invariant / equivariant DNNs, the volume of the corresponding QFS decreases, hence the generalization bounds also decrease.
In order for the bounds to reflect the volume of the QFS, its convergence rate in $m$ must be slow.
However, because the volume of the QFS decays sufficiently fast in $n$, the generalization bound decays rapidly as $n$ increases.
Figure \ref{fig:curve_log} shows a logarithmic version of the bound against $m$ in Figure \ref{fig:bound_oridinal}.


\begin{figure}[htbp]
    \centering
    \includegraphics[width=0.8\hsize]{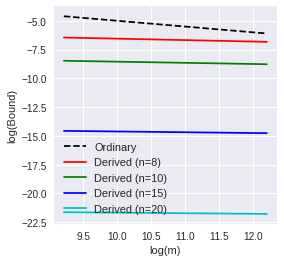}
    \caption{Logarithmic order of the bound for the generalization gap against $\log(m)$. \textit{Ordinary} (dashed line) denotes $\log(1/\sqrt{m})$ without invariance, and \textit{Derived} (colored lines) denotes the bound $\log( 1/\sqrt{n! m^{2/n}})$ with $n\in \{8,10,15,20\}$. The bound decreases sharply by $n$ with $m$ increasing to $200,000 \approx \log (12)$.\label{fig:curve_log}}
\end{figure}


\subsection{Relation with Sokolic et al. [2017]} \label{sec:relation}

The closest study to our work is the generalization error analysis of invariant classifiers by  \cite{sokolic2017generalization}.
The study shows that a number of \textit{transformations} by invariance describes their generalization bound.
While the result is similar to our study, our result improves their analysis in the following two ways.

(I). Our results are valid without the division assumptions in \cite{sokolic2017generalization}. 
The study assumes that the input space $\mathcal{X}$ can be written as $\mathcal{X}_0 \times T$ using a set of transformations $T$ and a base space $\mathcal{X}_0$. 
This assumption is hard to confirm for two reasons. 
First, the set of transformations is not the group itself in general. 
For example, consider the trivial action of a symmetric group, then despite the fact that the group is a symmetric group, the set of transformations is a single point set consisting of identities. 
Thus, we need to calculate the set of transformations on a case-by-case analysis. 
Second, it is difficult to find the base space. 
In the graph neural network case \citep{maron2018invariant}, the action is the permutation of nodes on adjacency matrices. 
In this case, it is hard to find the base space of this action. 
In contrast, our result is valid in this case.

(II).
Our results relax the stability assumption and achieve an accurate result.
\cite{sokolic2017generalization} places an algorithmic stability assumption that allows them to ignore the complexity of hypothesis spaces and build a theory solely on the complexity of the input space.
However, this stability assumption is not satisfied by deep learning in particular \citep{hoffer2017train,nagarajan2019uniform}.
In this study, we construct a theory that is independent of stability assumptions by connecting the functional hypothesis space and the theory of QFSs.

\subsection{ Analysis for Expressive Power of Invariant DNNs}

We discuss an expressive power of invariant neural networks, which determines how small the empirical loss $R_m(f^G)$ would be.
A volume of $R_m(f^G)$ is not the main concern of this study, but it is an important factor for the actual performance.


We investigate the expressive power of invariant deep neural networks.
To the aim, we define the \textit{H\"older space}, which is a class of smooth functions. 

\begin{definition}[H\"older space]
Let $\alpha > 0$ be a degree of smoothness.
For $f:I \to \R$, the \textit{H\"older norm} is 
\begin{align*}
   \|f\|_{\gH^\alpha} :=
   &\max_{\beta: |\beta| < \floor{\alpha}} \|\partial^\beta f(x)\|_{L^\infty(I)} \\
   &+  \max_{\beta = \floor{\alpha}} \sup_{x, x' \in I, x \neq x'} \frac{ | \partial^\beta f(x) - \partial^{\beta} f(x')| }{\|x - x'\|_{\infty}^{ \alpha - \floor{\alpha} } } ,
\end{align*}
A $B$-radius closed ball in the \textit{H\"older space} on $I$ is defined as
    $\gH^\alpha_{B} = \{f \in \gH^\alpha \mid \|f\|_{\gH^\alpha} \leq B\}$.

\end{definition}
$\gH^\alpha$ is a set of bounded functions that are $\alpha$-times differentiable.
The notion is often utilized in characterizing the expressive power of DNNs (e.g., refer to \cite{schmidt2017nonparametric}).
Then, we achieve the following result of an expressive power of invariant DNNs:
\begin{theorem}[Approximation rate of invariant DNNs] \label{thm:main2}
    For any $\varepsilon>0$, suppose $\gF^{S_n}$ has at most  $\mathcal{O}(\log(1/\varepsilon))$ layers and $\mathcal{O}(\varepsilon^{-D/\alpha}\log(1/\varepsilon))$ non-zero parameters.
    Then, for any $S_n$-invariant $f^*\in \gH^\alpha_B$, there exists $f^{S_n}\in \gF^{S_n}$ such that 
    \begin{align*}
        \|f^{S_n}-f^*\|_{L^{\infty}(I)} \leq \varepsilon.
    \end{align*}
\end{theorem}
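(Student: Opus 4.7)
The plan is to reduce the $S_n$-invariant approximation problem to the standard problem of approximating a Hölder function by an ordinary ReLU DNN, by pre-composing with a sorting network. Since sorting the coordinates of $x \in I$ produces a permutation of $x$, and $f^*$ is $S_n$-invariant, the sort step leaves $f^*$ untouched, so any ordinary approximation composed with sort is automatically $S_n$-invariant while preserving accuracy.

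First, I would invoke a standard Hölder approximation theorem for ReLU DNNs (Yarotsky 2017, or Schmidt-Hieber 2020) to obtain an ordinary DNN $\hat{f}:I \to \R$ with at most $\mathcal{O}(\log(1/\varepsilon))$ layers and $\mathcal{O}(\varepsilon^{-D/\alpha}\log(1/\varepsilon))$ non-zero parameters such that $\|\hat{f}-f^*\|_{L^\infty(I)} \leq \varepsilon$. Invariance is not needed at this step; $f^*$ is treated as a plain element of $\gH^\alpha_B$.

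Next, I would construct a ReLU DNN $\Phi:I \to \Delta_{S_n}$ that sorts its coordinates in decreasing order. Each pairwise comparator $(a,b)\mapsto(\max(a,b),\min(a,b))$ is realized in two ReLU layers using the identities $\max(a,b)=\mathrm{ReLU}(a-b)+b$ and $\min(a,b)=b-\mathrm{ReLU}(b-a)$. Chaining these according to odd-even transposition sort yields $\Phi$ with $O(n^2)$ parameters and $O(n)$ depth—constant in $\varepsilon$, hence absorbed into the stated budgets for $\varepsilon$ small enough. Setting $f^{S_n}:=\hat{f}\circ\Phi$, the network is $S_n$-invariant since $\Phi(\sigma\cdot x)=\Phi(x)$ for every $\sigma\in S_n$, so $f^{S_n}\in\gF^{S_n}$. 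The error is controlled by
$$ \|f^{S_n}-f^*\|_{L^\infty(I)} = \sup_{x\in I}|\hat{f}(\Phi(x))-f^*(\Phi(x))| \leq \|\hat{f}-f^*\|_{L^\infty(I)} \leq \varepsilon, $$
where the first equality uses $f^*(\Phi(x))=f^*(x)$ by $S_n$-invariance, and the inner inequality uses $\Phi(x)\in I$.

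The main obstacle is bookkeeping rather than any deep technical barrier: one must verify that the composition $\hat{f}\circ\Phi$ can be expressed in the strictly fully-connected form (\ref{def:dnn}) while respecting the parameter and depth counts (e.g., padding, zero-weight entries, and combining affine maps across the seam between $\Phi$ and $\hat{f}$). A conceptually cleaner alternative would be to use Proposition \ref{prop:correspondence} to transfer $f^*$ to $g^*\in C(\Delta_{S_n})$ and approximate $g^*$ on the smaller domain $\Delta_{S_n}$, potentially exploiting the $1/n!$ volume reduction to sharpen the constant; however, this route requires extending $g^*$ across the boundary of $\Delta_{S_n}$ while retaining Hölder regularity, which is unnecessary for the stated optimal rate.
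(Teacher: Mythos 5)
Your proposal is correct and follows essentially the same route as the paper's proof: the paper likewise realizes $\mathrm{Sort}$ as a ReLU network built from $\max/\min$ comparators (Proposition \ref{max}), applies the Schmidt-Hieber approximation theorem to obtain an ordinary network $f'$ with the stated depth and parameter budget, and uses the identity $f^* = f^*\circ\mathrm{Sort}$ (via the restriction bijection of Proposition \ref{prop:bijDNN}) to conclude $\|f^* - f'\circ\mathrm{Sort}\|_{L^\infty(I)} \leq \|f - f'\|_{L^\infty}$, absorbing the constant-size sorting layers into the counts. Your version merely approximates $f^*$ directly on $I$ rather than first restricting to $\Delta_{S_n}$, which is an immaterial reorganization of the same argument.
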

Theorem \ref{thm:main2} clarifies the expressive power of DNNs by showing the sufficient number of parameters to make the error arbitrarily small.
This result shows that the error decreases as the number of parameters increase with the rate $-D/\alpha$ up to logarithm factors.
Importantly, this rate is the optimal rate \textit{without} invariance \citep{yarotsky2017error}.
Hence, we prove that invariant DNNs can achieve the optimal approximation rate even with invariance.

\section{Conclusion} \label{sec:conclusion}

We proposed the generalization theory, which describes the errors and the effect of invariance / equivariance in a quantitative way.
We proved that the order of invariance improves the generalization bound.
Moreover, we prove $S_n$-invariant DNNs maintain the high expressive power regardless of the invariance property.



\begin{contributions} 
 AS conceived the idea of quotient feature spaces and wrote the paper.
    MI computed the covering of the function space and wrote the paper.
    KM created the code.
\end{contributions}

\begin{acknowledgements} 
The authors would like to thank the anonymous reviewers for their suggestions and helpful comments. 
The first author would like to thank Professor Kazuaki Miyatani for sharing his knowledge of pseudo distance induced on quotient spaces.
AS was supported t by the JSPS Grant-in-Aid for Scientific Research C (20K03743). 
MI was supported by JSPS KAKENHI (18K18114) and JST Presto (JPMJPR1852).
KM was supported by the JSPS Grant-in-Aid for Early-Career Scientists (20K19930).
\end{acknowledgements}

\bibliography{sannai_229}

\begin{thebibliography}{36}
\providecommand{\natexlab}[1]{#1}
\providecommand{\url}[1]{\texttt{#1}}
\expandafter\ifx\csname urlstyle\endcsname\relax
  \providecommand{\doi}[1]{doi: #1}\else
  \providecommand{\doi}{doi: \begingroup \urlstyle{rm}\Url}\fi

\bibitem[Anthony and Bartlett(2009)]{anthony2009neural}
Martin Anthony and Peter~L Bartlett.
\newblock \emph{Neural network learning: Theoretical foundations}.
\newblock cambridge university press, 2009.

\bibitem[Bartlett et~al.(2017)Bartlett, Foster, and
  Telgarsky]{bartlett2017spectrally}
Peter~L Bartlett, Dylan~J Foster, and Matus~J Telgarsky.
\newblock Spectrally-normalized margin bounds for neural networks.
\newblock In \emph{Advances in Neural Information Processing Systems}, pages
  6240--6249, 2017.

\bibitem[Bruna et~al.(2013)Bruna, Zaremba, Szlam, and LeCun]{bruna2013spectral}
Joan Bruna, Wojciech Zaremba, Arthur Szlam, and Yann LeCun.
\newblock Spectral networks and locally connected networks on graphs.
\newblock \emph{arXiv preprint arXiv:1312.6203}, 2013.

\bibitem[Cohen and Welling(2016)]{cohen2016group}
Taco Cohen and Max Welling.
\newblock Group equivariant convolutional networks.
\newblock In \emph{International conference on machine learning}, pages
  2990--2999, 2016.

\bibitem[Cohen et~al.(2019)Cohen, Geiger, and Weiler]{cohen2019general}
Taco~S Cohen, Mario Geiger, and Maurice Weiler.
\newblock A general theory of equivariant cnns on homogeneous spaces.
\newblock In \emph{Advances in Neural Information Processing Systems}, pages
  9142--9153, 2019.

\bibitem[Faber et~al.(2016)Faber, Lindmaa, Von~Lilienfeld, and
  Armiento]{faber2016machine}
Felix~A Faber, Alexander Lindmaa, O~Anatole Von~Lilienfeld, and Rickard
  Armiento.
\newblock Machine learning energies of 2 million elpasolite (a b c 2 d 6)
  crystals.
\newblock \emph{Physical review letters}, 117\penalty0 (13):\penalty0 135502,
  2016.

\bibitem[Henaff et~al.(2015)Henaff, Bruna, and LeCun]{henaff2015deep}
Mikael Henaff, Joan Bruna, and Yann LeCun.
\newblock Deep convolutional networks on graph-structured data.
\newblock \emph{arXiv preprint arXiv:1506.05163}, 2015.

\bibitem[Hoffer et~al.(2017)Hoffer, Hubara, and Soudry]{hoffer2017train}
Elad Hoffer, Itay Hubara, and Daniel Soudry.
\newblock Train longer, generalize better: closing the generalization gap in
  large batch training of neural networks.
\newblock In \emph{Advances in Neural Information Processing Systems}, pages
  1731--1741, 2017.

\bibitem[Lenssen et~al.(2018)Lenssen, Fey, and Libuschewski]{lenssen2018group}
Jan~Eric Lenssen, Matthias Fey, and Pascal Libuschewski.
\newblock Group equivariant capsule networks.
\newblock In \emph{Advances in Neural Information Processing Systems}, pages
  8844--8853, 2018.

\bibitem[Li et~al.(2018{\natexlab{a}})Li, Chen, and Hee~Lee]{li2018so}
Jiaxin Li, Ben~M Chen, and Gim Hee~Lee.
\newblock So-net: Self-organizing network for point cloud analysis.
\newblock In \emph{Proceedings of the IEEE conference on computer vision and
  pattern recognition}, pages 9397--9406, 2018{\natexlab{a}}.

\bibitem[Li et~al.(2018{\natexlab{b}})Li, Bu, Sun, Wu, Di, and
  Chen]{li2018pointcnn}
Yangyan Li, Rui Bu, Mingchao Sun, Wei Wu, Xinhan Di, and Baoquan Chen.
\newblock Pointcnn: Convolution on x-transformed points.
\newblock In \emph{Advances in Neural Information Processing Systems}, pages
  820--830, 2018{\natexlab{b}}.

\bibitem[Maron et~al.(2018)Maron, Ben-Hamu, Shamir, and
  Lipman]{maron2018invariant}
Haggai Maron, Heli Ben-Hamu, Nadav Shamir, and Yaron Lipman.
\newblock Invariant and equivariant graph networks.
\newblock \emph{arXiv preprint arXiv:1812.09902}, 2018.

\bibitem[Maron et~al.(2019)Maron, Fetaya, Segol, and
  Lipman]{maron2019universality}
Haggai Maron, Ethan Fetaya, Nimrod Segol, and Yaron Lipman.
\newblock On the universality of invariant networks.
\newblock \emph{arXiv preprint arXiv:1901.09342}, 2019.

\bibitem[Maron et~al.(2020)Maron, Litany, Chechik, and
  Fetaya]{maron2020learning}
Haggai Maron, Or~Litany, Gal Chechik, and Ethan Fetaya.
\newblock On learning sets of symmetric elements.
\newblock \emph{arXiv preprint arXiv:2002.08599}, 2020.

\bibitem[Monti et~al.(2017)Monti, Boscaini, Masci, Rodola, Svoboda, and
  Bronstein]{monti2017geometric}
Federico Monti, Davide Boscaini, Jonathan Masci, Emanuele Rodola, Jan Svoboda,
  and Michael~M Bronstein.
\newblock Geometric deep learning on graphs and manifolds using mixture model
  cnns.
\newblock In \emph{Proceedings of the IEEE Conference on Computer Vision and
  Pattern Recognition}, pages 5115--5124, 2017.

\bibitem[Nagarajan and Kolter(2019)]{nagarajan2019uniform}
Vaishnavh Nagarajan and J~Zico Kolter.
\newblock Uniform convergence may be unable to explain generalization in deep
  learning.
\newblock In \emph{Advances in Neural Information Processing Systems}, pages
  11615--11626, 2019.

\bibitem[Neyshabur et~al.(2015)Neyshabur, Tomioka, and
  Srebro]{neyshabur2015norm}
Behnam Neyshabur, Ryota Tomioka, and Nathan Srebro.
\newblock Norm-based capacity control in neural networks.
\newblock In \emph{Conference on Learning Theory}, pages 1376--1401, 2015.

\bibitem[Ntampaka et~al.(2016)Ntampaka, Trac, Sutherland, Fromenteau,
  P{\'o}czos, and Schneider]{ntampaka2016dynamical}
Michelle Ntampaka, Hy~Trac, Dougal~J Sutherland, Sebastian Fromenteau,
  Barnab{\'a}s P{\'o}czos, and Jeff Schneider.
\newblock Dynamical mass measurements of contaminated galaxy clusters using
  machine learning.
\newblock \emph{The Astrophysical Journal}, 831\penalty0 (2):\penalty0 135,
  2016.

\bibitem[Ravanbakhsh(2020)]{ravanbakhsh2020universal}
Siamak Ravanbakhsh.
\newblock Universal equivariant multilayer perceptrons.
\newblock \emph{arXiv preprint arXiv:2002.02912}, 2020.

\bibitem[Ravanbakhsh et~al.(2016)Ravanbakhsh, Oliva, Fromenteau, Price, Ho,
  Schneider, and P{\'o}czos]{ravanbakhsh2016estimating}
Siamak Ravanbakhsh, Junier~B Oliva, Sebastian Fromenteau, Layne Price, Shirley
  Ho, Jeff~G Schneider, and Barnab{\'a}s P{\'o}czos.
\newblock Estimating cosmological parameters from the dark matter distribution.
\newblock In \emph{ICML}, pages 2407--2416, 2016.

\bibitem[Sannai et~al.(2019)Sannai, Takai, and Cordonnier]{sannai2019universal}
Akiyoshi Sannai, Yuuki Takai, and Matthieu Cordonnier.
\newblock Universal approximations of permutation invariant/equivariant
  functions by deep neural networks.
\newblock \emph{arXiv preprint arXiv:1903.01939}, 2019.

\bibitem[Schmidt-Hieber(2017)]{schmidt2017nonparametric}
Johannes Schmidt-Hieber.
\newblock Nonparametric regression using deep neural networks with relu
  activation function.
\newblock \emph{arXiv preprint arXiv:1708.06633}, 2017.

\bibitem[Segol and Lipman(2019)]{segol2019universal}
Nimrod Segol and Yaron Lipman.
\newblock On universal equivariant set networks.
\newblock \emph{arXiv preprint arXiv:1910.02421}, 2019.

\bibitem[Shawe-Taylor(1989)]{shawe1989building}
John Shawe-Taylor.
\newblock Building symmetries into feedforward networks.
\newblock In \emph{1989 First IEE International Conference on Artificial Neural
  Networks,(Conf. Publ. No. 313)}, pages 158--162. IET, 1989.

\bibitem[Shawe-Taylor(1993)]{shawe1993symmetries}
John Shawe-Taylor.
\newblock Symmetries and discriminability in feedforward network architectures.
\newblock \emph{IEEE Transactions on Neural Networks}, 4\penalty0 (5):\penalty0
  816--826, 1993.

\bibitem[Shawetaylor(1995)]{shawetaylor1995sample}
John Shawetaylor.
\newblock Sample sizes for threshold networks with equivalences.
\newblock \emph{Information and Computation}, 118\penalty0 (1):\penalty0
  65--72, 1995.

\bibitem[Sokolic et~al.(2017)Sokolic, Giryes, Sapiro, and
  Rodrigues]{sokolic2017generalization}
Jure Sokolic, Raja Giryes, Guillermo Sapiro, and Miguel Rodrigues.
\newblock Generalization error of invariant classifiers.
\newblock In \emph{Artificial Intelligence and Statistics}, pages 1094--1103.
  PMLR, 2017.

\bibitem[Su et~al.(2018)Su, Jampani, Sun, Maji, Kalogerakis, Yang, and
  Kautz]{su2018splatnet}
Hang Su, Varun Jampani, Deqing Sun, Subhransu Maji, Evangelos Kalogerakis,
  Ming-Hsuan Yang, and Jan Kautz.
\newblock Splatnet: Sparse lattice networks for point cloud processing.
\newblock In \emph{Proceedings of the IEEE Conference on Computer Vision and
  Pattern Recognition}, pages 2530--2539, 2018.

\bibitem[Wu et~al.(2015)Wu, Song, Khosla, Yu, Zhang, Tang, and Xiao]{wu20153d}
Zhirong Wu, Shuran Song, Aditya Khosla, Fisher Yu, Linguang Zhang, Xiaoou Tang,
  and Jianxiong Xiao.
\newblock 3d shapenets: A deep representation for volumetric shapes.
\newblock In \emph{Proceedings of the IEEE conference on computer vision and
  pattern recognition}, pages 1912--1920, 2015.

\bibitem[Xu et~al.(2018)Xu, Fan, Xu, Zeng, and Qiao]{xu2018spidercnn}
Yifan Xu, Tianqi Fan, Mingye Xu, Long Zeng, and Yu~Qiao.
\newblock Spidercnn: Deep learning on point sets with parameterized
  convolutional filters.
\newblock In \emph{Proceedings of the European Conference on Computer Vision
  (ECCV)}, pages 87--102, 2018.

\bibitem[Yang et~al.(2018)Yang, Feng, Shen, and Tian]{yang2018foldingnet}
Yaoqing Yang, Chen Feng, Yiru Shen, and Dong Tian.
\newblock Foldingnet: Point cloud auto-encoder via deep grid deformation.
\newblock In \emph{Proceedings of the IEEE Conference on Computer Vision and
  Pattern Recognition}, pages 206--215, 2018.

\bibitem[Yarotsky(2017)]{yarotsky2017error}
Dmitry Yarotsky.
\newblock Error bounds for approximations with deep relu networks.
\newblock \emph{Neural Networks}, 94:\penalty0 103--114, 2017.

\bibitem[Yarotsky(2018)]{yarotsky2018universal}
Dmitry Yarotsky.
\newblock Universal approximations of invariant maps by neural networks.
\newblock \emph{arXiv preprint arXiv:1804.10306}, 2018.

\bibitem[Ying et~al.(2018)Ying, You, Morris, Ren, Hamilton, and
  Leskovec]{ying2018hierarchical}
Zhitao Ying, Jiaxuan You, Christopher Morris, Xiang Ren, Will Hamilton, and
  Jure Leskovec.
\newblock Hierarchical graph representation learning with differentiable
  pooling.
\newblock In \emph{Advances in neural information processing systems}, pages
  4800--4810, 2018.

\bibitem[Zaheer et~al.(2017)Zaheer, Kottur, Ravanbakhsh, Poczos, Salakhutdinov,
  and Smola]{zaheer2017deep}
Manzil Zaheer, Satwik Kottur, Siamak Ravanbakhsh, Barnabas Poczos, Ruslan~R
  Salakhutdinov, and Alexander~J Smola.
\newblock Deep sets.
\newblock In \emph{Advances in neural information processing systems}, pages
  3391--3401, 2017.

\bibitem[Zhang et~al.(2018)Zhang, Naitzat, and Lim]{zhang2018tropical}
Liwen Zhang, Gregory Naitzat, and Lek-Heng Lim.
\newblock Tropical geometry of deep neural networks.
\newblock \emph{arXiv preprint arXiv:1805.07091}, 2018.

\end{thebibliography}

\newpage
\onecolumn
\appendix

\section{Supplement} \label{sec:proofs}

We provide the deferred proofs of each section.

\subsection{Proof for Section 3}

\begin{proof}[Proof of Lemma \ref{lem:delta}]


For any $\epsilon>0$, there is a sequence of rational numbers $\{p_i/q_i\}$ such that $p_i/q_i< \epsilon$ and converges to $\epsilon$. Assume that Lemma \ref{lem:delta} holds for rational numbers, then we have$\gN_{\varepsilon,\infty}( \Delta_{S_n}) \leq \gN_{p_i/q_i,\infty}( \Delta_{S_n}) \leq C /  (n!~{(p_i}/q_i)^{n} ).$ 
Since $1/x^n$ is a continuous function and $\{p_i/q_i\}$ converges to $\epsilon$, we obtain $\gN_{\varepsilon,\infty}( \Delta_{S_n}) \leq C /  (n!~{(\varepsilon)^{n}} ).$ Hence it is enough to show the case of rational numbers.

We assume $\varepsilon = p/q$ for some integers $p,q>0$.  Let $\gC(I)$ be the covering of $I$, which is a set of  $\varepsilon$-cubes 
\begin{align*}
    c_{j_1,..,j_n}= \{x=(x_i)\in I \mid \varepsilon j_i \leq x_i \leq \varepsilon (j_i +1)\},
\end{align*}
for $j_i = 1,.., [q/p]+1$.
We can easily see that  $\gC(I)$  attains the minimum number of  $\varepsilon$-cubes covering $I$ and the number is $(\varepsilon^{-1}+1)^{n}=\frac{\varepsilon^{-n}}{n!} + \mathcal{O}(\varepsilon^{-(n-1)})$.  We show that we can find a subset of  $\gC(I)$ which cover $\Delta_{S_n}$  and whose cardinality is  $\frac{\varepsilon^{-n}}{n!} + \mathcal{O}(\varepsilon^{-(n-1)})$. The proof is as follows. At first, we calculate the number $A$ of cubes in $\gC(I)$ which intersect with the boundary of  $\sigma \cdot \Delta$. 
Then since the number of the orbit of the cubes which do not intersect with the boundary of  $\sigma \cdot \Delta$ is $n!$, if $A$ is $\mathcal{O}(\varepsilon^{-(n-1)})$, we can find the covering whose cardinality is $\frac{\varepsilon^{-n}}{n!} + \mathcal{O}(\varepsilon^{-(n-1)})$.
Since $\sigma \cdot \Delta$  is $\left\{ x \in I \mid x_{\sigma^{-1}(1)} \geq x_{\sigma^{-1}(2)} \geq \cdots \geq x_{\sigma^{-1}(n)}  \right\}$,
any boundary of $\sigma \cdot \Delta$ is of the form
$\left\{ x \in I  \mid x_{\sigma^{-1}(1)} \geq \cdots  x_{\sigma^{-1}(i)} = x_{ \sigma^{-1}(i+1)}\geq \cdots \geq x_{\sigma^{-1}(n)}  \right\}.$

From here, we fix $\sigma$ and $i$. 
Consider the canonical projection $\pi : \R^{ n}\to \R^{ n -1}$ which sends $x_{ \sigma^{-1}(i)}$-axis to zero. $\pi$ induces the map $\tilde{\pi} : \gC(I) \to \gC(\pi(I))$, where $\gC(\pi(I))$ is the covering of  $\pi(I)$ .  
Let $\gC(I)_B$ denote the subset of cubes in $\gC(I)$ which intersect with the set $B=\left\{ x \in I \mid  x_{\sigma^{-1}(i)} = x_{\sigma^{-1}(i+1)} \right\}.$ Then we can see that $\tilde{\pi}$ is injective on $\gC(I)_B$ as follows.
Assume that there are two cubes in $\gC(I)_B$ whose images by $\tilde{\pi}$ are equal.  Let us denote the centers of two cubes by  $c_{j_1,..,j_n}$ and $c_{k_1,..,k_n}$ .Then, since $\pi$ only kills $x_{\sigma^{-1} (i)}$, $j_p=k_p$ holds for $p \neq \sigma^{-1}(i) $.  But since $c_{j_1,..,j_n}$ and $c_{k_1,..,k_n}$ are in  $\gC(I)_B$, we have $j_{\sigma^{-1}(i)}=j_{\sigma^{-1}(i+1)}$  and $k_{\sigma^{-1}(i)}=k_{\sigma^{-1}(i+1)}$.  Hence $j_p=k_p$ for any $p$ and  $\tilde{\pi}$ is injective on $\gC(I)_B$.

Next, let $\gC(I)_{\tilde{B}}$ be the subset of $\varepsilon$-cubes in $\gC(I)$ which intersect a boundary of $\sigma \cdot \Delta$ for some $\sigma$.  We see that the cardinality of $\gC(I)_{\tilde{B}}$ is bounded by $e\varepsilon^{-(n-1)}$ for some constant $e>0$.
Since the number of components of the boundaries is finite, we prove the claim for a component $B$.  As we see before, $\tilde{\pi}_{\restriction \gC(I)_
B}$ is injective. This result implies that the number of cubes that intersect $B$ is bounded by a number of $\varepsilon$-cubes in $\gC(p(I))=\varepsilon^{-(n-1)}$.
Put $\gC(I)_{F}= \gC(I)-  \gC(I)_{\tilde{B}}$. Then we note that the action of $S_n$ on$\gC (I)_F$ is free, namely the number of the orbit of any cube in $\gC (I)_F$ is $|S_n|$. Hence,$$| \gC(I)_F \cap \Delta |= | \{c \in \gC(i) \mid c \subset \Delta \}| =  1/|S_n|| \gC (I)_F |\leq \varepsilon^{-n}/|S_n|.$$ 
Here, $ (\gC(I)_F \cap \Delta)\cup \gC(I)_{\tilde{B}}$ gives the covering of $\Delta$. This covering gives
\begin{align*}
        \gN_{\varepsilon,\infty}( \Delta) \leq \frac{\varepsilon^{-n}}{n!} + e\varepsilon^{-(n-1)}.
    \end{align*}
\end{proof}

\begin{proof}[Proof of Lemma \ref{lem:covering_g}]
By Proposition \ref{prop:surj}, we have $\tilde{\Delta}_G$ satisfying two conditions above.
Since the covering of $\tilde{\Delta}_G$ induces the covering of $\Delta_G$ by the condition 2, $ \gN_{\varepsilon,\infty}( \Delta_G) \leq \gN_{\varepsilon,\infty}( \tilde{\Delta}_G). $ 
On the other hand, by the condition 1, we have $ \gN_{\varepsilon,\infty}( \tilde{\Delta}_G) \leq |S_n|/|G|\cdot \gN_{\varepsilon,\infty}( \Delta_{S_n}). $
Combining with Lemma \ref{lem:delta}, we have the desired result.
\end{proof}

\begin{proof}[Proof of Proposition \ref{prop:dist}]
For the claim, assume that an action of $G$ preserves distance, namely,  $\|x-x'\|_2 = \|g(x)-g(x')\|_2$ holds.
We show that $d_G(y, y') =\inf_{x,x' \in \R^n} \{ \|x- x'\|_2 | \phi_G(x)=y, \phi_G(x')=y' \}. $
Consider the sum $\|x - b_1\|_2 + \| a_2 -  b_2\|_2 + ... + \| a_n - x'\|_2 $  and take an element $g\in G$ such that $a_2 = g\cdot b_1$.
Then, $\|x - b_1\|_2 + \|a_2- b_2\|_2 = \|x- b_1\|_2 + \| g\cdot b_1 - b_2\|_2 = \| x -  b_1\|_2 + \| b_1 - g^{-1} \cdot b_2\|_2 \geq \| x - g^{-1}\cdot b_2\|_2$.
By repeating this process, we have  $\| x - b_1\|_2 + \|a_2 - b_2\|_2 + ... + \| a_n - x'\|_2  \geq \| x - g\cdot x'\|_2$ for some $g \in G$. Hence, $d_G(y, y') =\inf_{x,x' \in I} \{ \| x - x'\|_2 | \phi_G(x)=y, \phi_G(x')=y' \}$. This implies $d_G(y, y')=0 \Rightarrow y=y'$.
\end{proof}

\begin{proof}[Proof of Proposition \ref{prop:surj}]
We confirm that $\tilde{\Delta}_G$ satisfies both the conditions 1 and 2. 
As the action of $G$ preserves the distance, $g_k: \Delta_{S_n}\to g_k \cdot \Delta_{S_n}$  is an isomorphism on metric spaces. 
Hence, condition 1 is satisfied. 

For condition 2, we consider $y \in \Delta_G = \phi_G(I)$. 
Then, there is an element $x \in I$ such that $y=\phi_G(x)$. 
As $I= \cup_{\sigma\in S_n} \sigma \cdot \Delta_{S_n}$ , there exist $\sigma \in S_n$ and $z \in \Delta_{S_n}$ such that $x=\sigma \cdot z$.  

In contrast, as $\{g_1,..,g_K | g_k \in G\}$ is a complete system of representatives of  $G\backslash S_n$, there exist $\tau \in G$ and $g_k$ such as $\sigma = \tau \cdot g_i$. Then $\phi_G(g_k z)= \phi_G(\tau \cdot g_k z)=\phi_G(x)=y$ and $g_k \cdot z \in \tilde{\Delta}_G$. 
Hence $\phi_G(\tilde{\Delta}_G)= \Delta_G$.
\end{proof}

\subsection{Proof for Section 3.2}

\begin{proof}[Proof of Proposition \ref{prop:correspondence}]
We prove $\hat{\phi}_G$ is injective and surjective. Assume $f \in C(\Delta_G)$ and put $\hat{\phi}_G (f)=f\circ \phi_G$. 
Then since $\phi_G$ is $G$-invariant, so is $\hat{\phi}_G(f).$ Also, since $\phi_G$ is surjective, $\hat{\phi}_G$ is injective.  Take $g \in C^G(I)$, then we define $f \in C(\Delta_G)$ as follows;  for any $y \in \Delta_G$, take $x\in I$ such that $\phi_G(x)=y$  and define $f(y)=g(x)$. This map is well defined because $g$ is $G$-invariant. $\hat{\phi}_G(f)(x) = f \circ \phi_G (x)= f(y)=g(x).$ Hence, we obtain the desired result.\\
Next, we prove the Lipschitz properties. Take $f \in C(\Delta_G)$ and assume $f$ is $K$-Lipschitz. Then 
for any $x, x' \in I$,
$$
d_G(\phi_G(x), \phi_G(x')) \geq K d(f(\phi_G(x)), f(\phi_G(x'))),
$$ by $K$-Lipschitz property of $f$.  By the definition of $d_G$, we have $d_G(\phi_G(x), \phi_G(x')) \leq d(x, x')$. Hence, $\hat{\phi}_G(f)$ is $K$-Lipschitz continuous. 
Conversely, assume $\hat{\phi}_G (f)$ is $K$-Lipschitz. Take any $y, y' \in I$, then for any $x, x'\in I$ satisfying $\phi_G(x)=y, \phi_G(x)=y',$ 
$$
d(x, x')\geq K d(f(\phi_G(x)), f(\phi_G(x')))=d(f(y), f(y')),
$$
by $K$-Lipschitz property of $\hat{\phi}_G (f)$. Hence by taking infimum of the left hand side, we have
$$
d_G(y, y')=\mbox{inf}\ d_G(\phi_G(x), \phi_G(x')) \geq K d(f(y), f(y')).
$$
Hence, $f$ is $K$-Lipschitz.
\end{proof}

\begin{proof}[Proof of Proposition \ref{prop:delta_sn}]
We first note that $\phi$ is the identity map on $\Delta$, because elements in $\Delta$ are sorted.  
This implies $\Delta \cong \phi(\Delta)$. 
Therefore, it is sufficient to show $\Delta \cong \Delta_{S_n}$.
As $\Delta $ is a subset of $I$, we have the distance preserving map ${\phi_{S_n}}_{\restriction_{\Delta}}: \Delta \to \Delta_{S_n}$. 
 
Then, we show that ${\phi_{S_n}}_{\restriction_{\Delta}}$ is a bijection. \textit{Injectivity}: Ley us take any $x,y \in \Delta$ such that ${\phi_{S_n}}_{\restriction_{\Delta}}(x)={\phi_{S_n}}_{\restriction_{\Delta}}(y)$. Then $x=g\cdot y$ for some $g \in S_n$.  
However, as $y$ is in $\Delta$, $\{g\cdot y|g \in S_n \} \cap \Delta = \{y\}$. Hence, $x=y$.
\textit{Surjectivity}: Take any $z\in \Delta_{S_n}$, then there is $x\in I$ such that $z=\phi_{S_n}(x)$. 
By the construction of $\Delta$, there is $g\in S_n$ and $y\in\Delta$ that satisfies $x=g\cdot y$.  
Hence, $z=\phi_{S_n}(x)=\phi_{S_n}(g\cdot y)=\phi_{S_n}(y)$.
\end{proof}

\begin{proof}[Proof of Proposition \ref{prop:covering_bound_g}]
Firstly, we show $\hat{\phi}_G^{-1}(f) \in \gF(I)$ with any $f \in \gF^{G}(I)$.
For $f \in \gF^{G}(I)$, we consider $\hat{\phi}_G^{-1}(f) \in C(\Delta_G)$ as Proposition \ref{prop:correspondence}.
Suppose $f$ and $f'$ are $K$-Lipschitz continuous, then $\hat{\phi}_G^{-1}(f)$ is also $K$-Lipschitz continuous by Proposition \ref{prop:correspondence}.
Since \cite{zhang2018tropical} states that Lipschitz continuous functions are represented by DNNs, we have $\hat{\phi}_G^{-1}(f) \in \gF(\Delta_G)$.

Fix $f_1,f_2 \in \gF^{G}(I)$.
Then, there exist $f_1',f_2' \in \gF(\Delta_G)$ such as $f_1 = \hat{\phi}_G(f_1')$ and $f_2 = \hat{\phi}_G(f_2')$.
Then, we have  \begin{align*}
    \|f_1-f_2\|_{L^\infty(I)}&= \|\hat{\phi}_G(f_1')-\hat{\phi}_G(f_1')\|_{L^\infty(I)} =\|f_1' \circ \phi_G-f_2' \circ \phi_G\|_{L^\infty(I)} \leq \|f_1'-f_2'\|_{L^\infty(\Delta_G)}.
\end{align*}
Based on the result, we can bound $ \gN_{\varepsilon,\infty}({\gF}^{G}(I))$ by $ \gN_{\varepsilon,\infty}({\gF}(\Delta_G))$. 
Let us define $ N := \gN_{\varepsilon,\infty}({\gF}(\Delta_G))$.
Then, there exist $f'_1,...,f'_N$ such that for any $f'\in {\gF}(\Delta_G)$, there exists $j \in \{1,...,N\}$ such as $\|f'_j - f'\|_{L^\infty(\Delta_G)} \leq \varepsilon$.
Here, for any $f \in \gF^{G}(I)$, there exists $f_j := \hat{\phi}_G^{-1}(f'_j)\in \gF^{G}(I)$ and it satisfies $\|f - f_j\|_{L^\infty(I)} \leq \| \hat{\phi}_G(f)- \hat{\phi}_G(f_j)\|_{L^\infty(\Delta_G)} \leq \varepsilon$.
Then, we obtain the statement.
\end{proof}

\begin{proof}[Proof of Theorem \ref{thm:main1}]
Combining Proposition \ref{prop:covering_bound_g} and \ref{prop:covering_bound_delta}, we obtain a bound for $\log \gN_{2 C_\Delta \delta,\infty}( \gF^{G}(I))$.
Then, we substitute it into \eqref{ineq:basic_bound} and obtain the statement of Theorem \ref{thm:main1}.
\end{proof}

\subsection{Proof for Section 4}

\begin{proof}[Proof of Proposition \ref{prop:covering_bound_delta}]
We bound a covering number of a set of $C_\Delta$-Lipschitz continuous functions on $\Delta$.
Let $\{x_1,...,x_K\} \subset \Delta$ by a set of centers of $\delta$-covering set for $\Delta$.
By Lemma \ref{lem:delta}, we set $K = C / (|G|~ \delta^{n})$ with $\delta$ with a parameter $\delta > 0$, where $C>0$ is a constant.

We will define a set of vectors to bound the covering number.
We define a discretization operator $A:\gF(\Delta_G) \to \R^K$ as
\begin{align*}
    Af = (f(x_1)/\delta,...,f(x_K)/\delta)^\top.
\end{align*}
Let $\gB_\delta(x)$ be a ball with radius $\delta$ in terms of the $\|\cdot\|_\infty$-norm.
For two functions $f,f' \in \gF(\Delta_G)$ such as $Af = Af'$, we obtain
\begin{align*}
    \|f-f'\|_{L^\infty(I)} &= \max_{k =1,...,K} \sup_{x \in \gB_{\delta}(x_k)}|f(x)-f'(x)|\\
    &\leq \max_{k =1,...,K} \sup_{x \in \gB_{\delta}(x_k)}|f(x)-f(x_k)| + |f'(x_k)-f(x_k)| \\
    &\leq 2 C_\Delta \delta,
\end{align*}
where the second inequality follows $f(x_k) = f'(x_k)$ for all $k = 1,...,K$ and the last inequality follows the $C_\Delta$-Lipschitz continuity of $f$ and $f'$.
By the relation, we can claim that $\gF(\Delta_G)$ is covered by $2C_\Delta \delta$ balls whose center is characterized by a vector $b \in \R^K$ such as $b=Af$ for $f \in \gF(\Delta_G)$.
Namely, $\gN_{2 C_\Delta \delta,\infty}( \gF(\Delta_G))$ is bounded by a number of possible $b$.

Then, we construct a specific set of $b$ to cover $\gF(\Delta_G)$.
Without loss of generality, assume that $x_1,...,x_K$ are ordered satisfies such as $\|x_k - x_{k+1}\|_\infty \leq 2\delta$ for $k=1,...,K-1$.
By the definition, $f \in \gF(\Delta_G)$ satisfies  $\|f\|_{L^\infty(\Delta)} \leq B$.
$b_1 = f(x_1)$ can take values in $[-B/\delta,B/\delta]$. 
For $b_2 = f(x_2)$, since $\|x_1-x_2\|_\infty \leq 2\delta$ and hence $|f(x_1)-f(x_2)| \leq 2 C_\Delta \delta$, a possible value for $b_2$ is included in $[(b_1-2\delta)/\delta,(b_1+2\delta)/\delta]$.
Hence, $b_2$ can take a value from an interval with length $4$ given $b_1$.
Recursively, given $b_k$ for $k=1,...,K-1$, $b_{k+1}$ can take a value in an interval with length $4$.

Then, we consider a combination of the possible $b$.
Simply, we obtain the number of vectors is $(2cB/\delta) \cdot (4c)^{K-1} \leq (8c^2B/\delta)^{K-1}$ with a universal constant $c \geq 1$.
Then, we obtain that
\begin{align*}
    &\log \gN_{2 C_\Delta \delta,\infty}( \gF(\Delta_G)) \leq (K-1)\log (8c^2B/\delta).
\end{align*}
Then, we specify $K$ which describe a size of $\Delta$ through the set of covering centers.
\end{proof}

\subsection{Proof for Section 5}
\begin{proposition}\label{prop:covering_equiv_transitive}
Suppose $G$ is transitive.
Then, for any $\varepsilon >0 $, we have
\begin{align*}
    &\gN_{\varepsilon,\infty}( \tilde{\gF}^{G}(I)) \leq   \gN_{\varepsilon,\infty}( \gF^{\mathrm{St}(G)}(I)).
\end{align*}
\end{proposition}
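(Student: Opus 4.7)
The plan is to lift a covering of the $\mathrm{St}(G)$-invariant scalar DNNs to a covering of the $G$-equivariant vector DNNs by means of the decomposition in equation~\eqref{rep:equiv} (Proposition~3.1 of \citealp{sannai2019universal}, restated as Proposition~\ref{prop:relation-invariant-equivariant}). That decomposition shows every $\tilde{f}^G \in \tilde{\gF}^G(I)$ has the form $\tilde{f}^G = (f^{\mathrm{St}(G)} \circ \tau_{1,1}, \ldots, f^{\mathrm{St}(G)} \circ \tau_{1,n})^\top$ with a unique $f^{\mathrm{St}(G)} \in \gF^{\mathrm{St}(G)}(I)$, where each $\tau_{1,j}$ is the permutation sending coordinate $1$ to coordinate $j$. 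Equivalently, the map $\Psi : \gF^{\mathrm{St}(G)}(I) \to \tilde{\gF}^G(I)$ defined by $\Psi(f) = (f \circ \tau_{1,1}, \ldots, f \circ \tau_{1,n})^\top$ is a bijection between the two function classes.

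The construction of the covering goes as follows. Let $N := \gN_{\varepsilon,\infty}(\gF^{\mathrm{St}(G)}(I))$ and fix a minimal $\varepsilon$-covering $\{f_1, \ldots, f_N\} \subset \gF^{\mathrm{St}(G)}(I)$ with respect to $\|\cdot\|_{L^\infty(I)}$. I claim $\{\Psi(f_1), \ldots, \Psi(f_N)\}$ is an $\varepsilon$-covering of $\tilde{\gF}^G(I)$ with respect to $\vertiii{\cdot}_{L^\infty(I)}$. Given any $\tilde{f}^G$, write $\tilde{f}^G = \Psi(f^{\mathrm{St}(G)})$ and pick $i$ with $\|f^{\mathrm{St}(G)} - f_i\|_{L^\infty(I)} \le \varepsilon$. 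Since each $\tau_{1,j}$ is a bijection on $I$, a change of variable gives $\|f^{\mathrm{St}(G)} \circ \tau_{1,j} - f_i \circ \tau_{1,j}\|_{L^\infty(I)} = \|f^{\mathrm{St}(G)} - f_i\|_{L^\infty(I)}$ for every $j$. Taking the maximum over $j$,
\[
\vertiii{\tilde{f}^G - \Psi(f_i)}_{L^\infty(I)} = \max_{j} \|f^{\mathrm{St}(G)} \circ \tau_{1,j} - f_i \circ \tau_{1,j}\|_{L^\infty(I)} \le \varepsilon,
\]
which yields $\gN_{\varepsilon,\infty}(\tilde{\gF}^G(I)) \le N$.

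The main obstacle is not the covering argument itself (which reduces to the elementary bijection-of-domains observation above) but rather the justification that $\Psi$ really is a bijection at the level of the specific function classes $\gF^{\mathrm{St}(G)}$ and $\tilde{\gF}^G$ used in the paper. One must argue that (i) extracting the first coordinate of a $G$-equivariant DNN yields an $\mathrm{St}(G)$-invariant DNN in the same architectural family, and (ii) stacking the compositions $f \circ \tau_{1,j}$ of an $\mathrm{St}(G)$-invariant DNN yields a DNN that is $G$-equivariant. For ReLU fully connected networks the $\tau_{1,j}$ act by coordinate permutation matrices, so the compositions are still in the architectural family, and the cited proposition provides the structural correspondence; but this is the step that has to be spelled out carefully by reducing to Proposition~\ref{prop:relation-invariant-equivariant}.
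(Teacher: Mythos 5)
Your proof is correct and follows essentially the same route as the paper: the paper proves a general non-transitive version (Proposition~\ref{prop:equiv_general}) by lifting a covering of the $\mathrm{Stab}_G(j)$-invariant classes through the decomposition of Proposition~\ref{prop:relation-invariant-equivariant} and using $\|f\circ\tau - f'\circ\tau\|_\infty = \|f-f'\|_\infty$ for permutations $\tau$, then specializes to $J=1$ for the transitive case. Your argument is exactly that specialization, and you correctly flag the same point the paper must (and does, partially) address, namely that the centers $\Psi(f_i)$ lie in $\tilde{\gF}^G(I)$ and that every equivariant DNN decomposes with an invariant component in $\gF^{\mathrm{St}(G)}(I)$.
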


\begin{proof}[Proof of Proposition\ref{prop:covering_equiv_transitive}]
The first statement simply follows Proposition \ref{prop:equiv_general} with setting $J=1$, since $g \in G$ is transitive.
In the case of $S_n$, we have $J=1$ and $\mbox{Stab}(1)\cong S_{n-1}$. This gives the second statement.
\end{proof}

\begin{proof}[Proof of Theorem \ref{thm:main1_equiv} and Corollary \ref{cor:sn_equiv}]
For Theorem \ref{thm:main1_equiv}, we combine the bound \eqref{ineq:basic_bound}, Lemma \ref{lem:covering_g} and
Proposition \ref{prop:covering_bound_g}.
Thus, we obtain the statement.

For Corollary \ref{cor:sn_equiv}, since $S_n$ is transitive, the statement obviously holds with $|\mathrm{St}(G)| = |S_{n-1}| = (n-1)!$.
\end{proof}

\subsection{Proof for Section 6}

To prove Theorem \ref{thm:main2}, we consider a $\mbox{Sort}$ map and show that DNNs can represent the map.
Let $\max^{(k)}(x_1,...,x_n)$ be a map which returns the $k$-th largest value of inputted elements $x_1,...,x_n$ for $k=1,..,n$.
Then, we provide a form of $\mbox{Sort}$ as
\begin{align*}
    \mbox{Sort}(x_{1},\ldots , x_{n}) = (\mbox{max}^{(1)}(x_{1},\ldots , x_{n}), \ldots , \mbox{max}^{(n)}(x_{1},\ldots , x_{n}) ).
\end{align*}
To represent it, we provide the following propositions.

\begin{proposition}\label{max} $\max^{(j)}(z_1,\ldots , z_N)$ and $\min^{(j)}(z_1,\ldots , z_N)$ are represented by an existing deep neural networks with an ReLU activation for any $j = 1,...,N$.
\end{proposition}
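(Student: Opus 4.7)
The plan is to realize a full sort of the inputs by a ReLU network, and then read off the $j$-th coordinate of the output to obtain $\max^{(j)}$ (respectively $\min^{(j)}$). The strategy has three layers: a primitive gate, a sorting scheme built from the gate, and a selection of the desired coordinate.

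First, I would establish the primitive. For any $a,b\in\mathbb{R}$ the identities
\begin{align*}
\max(a,b) &= \mathrm{ReLU}(a-b) + b, \\
\min(a,b) &= a - \mathrm{ReLU}(a-b),
\end{align*}
show that the compare-and-swap map $\mathrm{CS}(a,b) := (\max(a,b),\min(a,b))$ is exactly computed by a single ReLU hidden layer: one hidden unit computes $\mathrm{ReLU}(a-b)$, and the output layer adds $b$ (resp.\ subtracts from $a$) by affine combination. Because ReLU networks admit parallel composition, any collection of disjoint coordinate pairs can be compare-and-swapped in parallel inside one hidden layer.

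Second, I would invoke a concrete comparison-based sorting network, e.g.\ bubble sort. Bubble sort sorts $N$ inputs using $N-1$ sequential passes, each pass consisting of $N-1$ compare-swaps on adjacent coordinates (pairs $(i,i+1)$); the odd-indexed and even-indexed compare-swaps inside one pass act on disjoint coordinates and can therefore be packed into two parallel sublayers. Stacking these sublayers gives a ReLU network of depth $O(N)$ and width $O(N)$ whose output is $\mathrm{Sort}(z_1,\dots,z_N) = (\max^{(1)},\dots,\max^{(N)})$. Extracting the $j$-th coordinate is a linear projection, hence can be absorbed into the final affine map, yielding a ReLU network that realizes $\max^{(j)}$ exactly. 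For $\min^{(j)}$, I would simply note $\min^{(j)}(z_1,\dots,z_N) = \max^{(N-j+1)}(z_1,\dots,z_N)$, or equivalently apply the same construction with the roles of $\max$ and $\min$ swapped inside $\mathrm{CS}$.

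There is essentially no deep obstacle: the only point requiring a little care is the bookkeeping that lets compare-and-swap layers compose cleanly, namely carrying the untouched coordinates forward via the identity, which is implementable in ReLU networks by the standard trick $x = \mathrm{ReLU}(x) - \mathrm{ReLU}(-x)$ so that pass-through channels can be implemented within the same hidden layers used for the compare-swaps. Once that identity-propagation is in place, the construction goes through verbatim and gives an explicit ReLU network of depth $O(N)$ and width $O(N)$ exactly representing $\max^{(j)}$ and $\min^{(j)}$ for every $j = 1,\dots,N$.
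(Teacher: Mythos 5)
Your proof is correct, but it takes a genuinely different route from the paper's. You build a full sorting network out of ReLU compare-and-swap gates (the primitive $\max(a,b)=\mathrm{ReLU}(a-b)+b$, $\min(a,b)=a-\mathrm{ReLU}(a-b)$ is the same one the paper starts from) and then read off the $j$-th output coordinate; the paper instead constructs $\max^{(1)}$ by a pairwise tournament and then obtains $\max^{(j)}$ for $j\ge 2$ by induction via the combinatorial identity $\max^{(j)}(z_1,\dots,z_N)=\min_{\ell}\max^{(j-1)}(z_{-\ell})$, where $z_{-\ell}$ drops the $\ell$-th input. Your construction is quantitatively stronger: an odd--even transposition network has depth $O(N)$ and $O(N^2)$ gates and computes \emph{all} order statistics simultaneously, which is exactly the $\mbox{Sort}$ map the paper needs later (Proposition \ref{prop:bijDNN}, Theorem \ref{thm:main2}), whereas the paper's recursion spawns $N$ copies of a $\max^{(j-1)}$ network at each level and so grows super-polynomially in $j$ if one were to track sizes (the proposition only asserts representability, so this costs nothing there). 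The paper's route buys a shallower $O(\log N)$ circuit for $\max^{(1)}$ and avoids any appeal to the correctness of a sorting network. One small point of care in your write-up: a single sequential bubble-sort pass is not equivalent to doing its odd-indexed swaps in parallel followed by its even-indexed swaps, so you should justify the depth-$O(N)$ network as odd--even transposition sort in its own right (correct by the $0$--$1$ principle) rather than as a parallelization of bubble-sort passes; the identity-propagation trick $x=\mathrm{ReLU}(x)-\mathrm{ReLU}(-x)$ you mention is indeed needed and suffices to make the layers compose.
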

\begin{proof}[Proof of Proposition \ref{max}]
Firstly, since
\begin{align*}
\max(z_1, z_2) = \max(z_1-z_2, 0) + z_2, 
\end{align*}
and
\begin{align*}
\min(z_1, z_2) = -\max(z_1-z_2, 0) + z_1
\end{align*}
hold, we see the case of $j=1, N=2$.
By repeating $\max(z_1, z_2)$, we construct $\max^{(1)}(z_1,\ldots , z_N) $ and $\min^{(1)}(z_1,\ldots , z_N)$.
Namely, we prove the claim in the case of $j=1$ and arbitrary $N$. 
At first, we assume $N$ is even without loss of generality, then we divide the set $\{z_1,...z_N\}$ into sets of pairs $\{(z_1, z_2), ... (z_{N-1}, z_N)\}$. 
Then, by taking a max operation for each of the pairs, we have $\{ y_1=\max(z_1, z_2), ..., y_{N/2} = \max(z_{N-1}, z_N)\}$ .
We repeat this process to terminate. 
Then we have $\max^{(1)}(z_1, \ldots, z_N)$, which is represented by an existing deep neural network. 
Similarly, we have $\min^{(1)}(z_1,\ldots , z_N)$. 
Finally, we prove the claim on $j = 2,..., N$ by induction.
Assume that for any $N$ and $\ell<j$,  $\max^{(\ell)} (z_1,\ldots , z_N)$ is represented by a deep neural network. 
We construct  $\max^{(j)}(z_1,\ldots , z_N)$ as follows:
since  
\begin{align*}
    {\max}^{(j-1)}({z}_{-\ell}) &= \begin{cases}
    \max^{(j-1)}(z_1,\ldots , z_N) & (\mbox{if~}\  z_\ell \leq \max^{(j)}(z_1,\ldots , z_N) ) \\
    \max^{(j)}(z_1,\ldots , z_N)  & (\mbox{otherwise})
  \end{cases}
\end{align*}
holds, we have $\max^{(j)}(z_1,\ldots , z_N) = \min (\{\max^{(j-1)}({Z}_{\ell})\mid \ell = 1,...,N\})$. By inductive hypothesis, the right hand side is represented by a deep neural network.
\end{proof}

Further, we provide the following result for a technical reason.

\begin{proposition}\label{prop:bijDNN}
The restriction map
$$
\Lambda : \mathcal{F}^{S_n}(I) \to \mathcal{F}(\Delta_{S_n})
$$
is bijective, where $\Lambda(f)=  f_{\restriction_{\Delta_{S_n}}}$.
\end{proposition}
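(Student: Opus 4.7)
The plan is to prove bijectivity by treating injectivity and surjectivity separately, using the explicit description of $\Delta_{S_n}$ from Proposition \ref{prop:delta_sn} and the representability of the sort map from Proposition \ref{max}.

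For injectivity, I would argue as follows. Suppose $f_1, f_2 \in \mathcal{F}^{S_n}(I)$ satisfy $\Lambda(f_1) = \Lambda(f_2)$, i.e., $f_1$ and $f_2$ agree on $\Delta_{S_n}$. Recall from the discussion after Proposition \ref{prop:delta_sn} that $I = \bigcup_{\sigma \in S_n} \sigma \cdot \Delta_{S_n}$. Hence for any $x \in I$ there exist $\sigma \in S_n$ and $y \in \Delta_{S_n}$ with $x = \sigma \cdot y$. By $S_n$-invariance, $f_i(x) = f_i(\sigma \cdot y) = f_i(y)$ for $i = 1, 2$, and the two right-hand sides coincide by assumption. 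Thus $f_1 = f_2$ on all of $I$, proving injectivity.

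For surjectivity, take any $g \in \mathcal{F}(\Delta_{S_n})$ and define
\begin{align*}
    f(x) := g(\mathrm{Sort}(x)), \qquad x \in I,
\end{align*}
where $\mathrm{Sort}$ is the sorting map $\mathrm{Sort}(x_1,\ldots,x_n) = (\max^{(1)}(x),\ldots,\max^{(n)}(x))$. First, $f$ is $S_n$-invariant because $\mathrm{Sort}(\sigma \cdot x) = \mathrm{Sort}(x)$ for every $\sigma \in S_n$. Second, for $x \in \Delta_{S_n}$ the coordinates of $x$ are already in non-increasing order, so $\mathrm{Sort}(x) = x$ and therefore $\Lambda(f) = f_{\restriction_{\Delta_{S_n}}} = g$. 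It remains to check that $f$ lies in $\mathcal{F}^{S_n}(I)$, i.e., that $f$ is realizable as a ReLU DNN. By Proposition \ref{max}, each coordinate $\max^{(k)}$ of $\mathrm{Sort}$ is representable by a ReLU DNN; stacking these coordinate networks in parallel yields a DNN that computes $\mathrm{Sort}$. Composing this network with the network realizing $g$ gives a ReLU DNN computing $f$, since composition of ReLU DNNs is again a ReLU DNN (feeding the outputs of one into the inputs of the next merely concatenates affine and ReLU layers).

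The main obstacle is the last point: rigorously checking that the composition $g \circ \mathrm{Sort}$ falls inside the class $\mathcal{F}^{S_n}(I)$ as defined by \eqref{def:dnn}. This is essentially bookkeeping — one must verify that the concatenated architecture satisfies the fully-connected ReLU form assumed in Section 2 — but it is the only non-formal ingredient, and it relies squarely on Proposition \ref{max} together with the observation that $\mathrm{Sort}$ is the identity on $\Delta_{S_n}$, so no extra Lipschitz or approximation argument is needed.
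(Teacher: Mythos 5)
Your proof is correct and follows essentially the same route as the paper: the paper exhibits the explicit two-sided inverse $\Phi(g) = g \circ \mathrm{Sort}$ and verifies $\Lambda\circ\Phi = \mathrm{id}$ (Sort is the identity on $\Delta_{S_n}$) and $\Phi\circ\Lambda = \mathrm{id}$ (by $S_n$-invariance), which is exactly your surjectivity witness plus a repackaging of your injectivity argument via $I = \bigcup_{\sigma\in S_n}\sigma\cdot\Delta_{S_n}$. The one point you flag as "bookkeeping" — that $g\circ\mathrm{Sort}$ is again a ReLU network of the assumed fully-connected form — is treated at the same level of informality in the paper, which likewise leans on Proposition \ref{max} without spelling out the concatenated architecture.
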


\begin{proof}[Proof of Proposition \ref{prop:correspondence}]
To show the Proposition, we firstly define sorting layers which is an $S_n$-invariant network map from $I$ to $\Delta$.
Then by Proposition \ref{max}, $\mbox{Sort}(x_{1},\ldots , x_{n}) $ is also a function by an $S_n$-invariant deep neural network and $\mbox{Sort}(x_{1},\ldots , x_{n})$ is the function from $I$ to $\Delta$. 
By using this function, we define the inverse of $\Lambda$. 
For any function $f$ by a deep neural network on $\Delta$,  we define $\Phi(f) = f \circ \mbox{Sort}$.
We confirm $\Lambda\circ \Phi = \mbox{id}_{\gF_{\Delta}}$ and $\Phi \circ \Lambda =\mbox{id}_{\gF^{S_n}}$. Since we have
\begin{align*}
\Lambda\circ \Phi (f) = \Lambda\circ f\circ \mbox{Sort} = (f\circ \mbox{Sort})_{\restriction_{\Delta}} = f,
\end{align*}
$\Lambda\circ \Phi$ is equal to  $\mbox{id}_{\gF_{\Delta}}$.
Similarly,
\begin{align*}
\Phi \circ \Lambda(f) = \Phi \circ f_{\restriction_{\Delta}} = f_{\restriction_{\Delta}} \circ \mbox{Sort} = f,
\end{align*}
where the last equality follows from the $S_n$-invariance of $f$.
Hence, we have the desired result.
\end{proof}

Now, we are ready to prove Theorem \ref{thm:main2}.
\begin{proof}[Proof of Theorem \ref{thm:main2}]
Let $f^*$ be an ${S_n}$-invariant function on $I$. Then by Proposition \ref{prop:bijDNN}, we have a function $f$ on $\Delta_{S_n}$ such that 
$f^*= f \circ \mbox{Sort} $ holds. 
By Theorem 5 in \cite{schmidt2017nonparametric}, for enough big $N$, there exists a constant $c>0$ and a neural network $f'$ with at most $\mathcal{O}(\log(N))$ layers and at most $\mathcal{O}(N\log(N))$ nonzero weights such that $\|f-f'\|_{L^{\infty}(I)} \leq cN^{-\alpha/p}$. 
Then, we have
\begin{align*}
&\| f^*-f'\circ \mbox{Sort} \|_{L^{\infty}(I)} = \|f\circ \mbox{Sort}-f'\circ \mbox{Sort} \|_{L^{\infty}(I)} \leq \|f-f' \|_{L^{\infty}(\Delta)} \leq  \|f-f' \|_{L^{\infty}(I)} \leq cN^{-\alpha/p},
\end{align*}
where $f\circ \mbox{Sort}$ is a neural network with at most $\mathcal{O}(\log(N))+K_1$ layers and at most $\mathcal{O}(N\log(N)) + K_2$ nonzero weights, where $K_1$ and $K_2$ are the number of layers and the number of  nonzero weights of the neural network expressing $\mbox{Sort}$ respectively. By replacing $N^{-1}$ with $\varepsilon$, we have the desired inequality.
\end{proof}

\section{Generalization Bound for Equivalent DNN without Transitive Assumption } \label{sec:equiv_without_transitive}

In this section, we provide a general version of the result in Section \ref{sec:gen_equiv}.
Namely, we relax the transitive assumption in the section.
To the goal, we newly define a general version of a stabilizer subgroup.

Let $[n] = \{1,2,\dots, n \}$ be an index set and $G$ be a finite group action on $[n]$.
For $i \in [n]$, we define the stabilizer subgroup $\mbox{Stab}_G(i)$ associated with $G$ as
\begin{align*}
    \mbox{Stab}_G(i) = \left\{\sigma\in G \mid \sigma\cdot i = i \right\}.
\end{align*}
We also consider the following decomposition of $[n]$ as 
$$
[n]= \bigsqcup_{j \in \mathcal{J}} \mathcal{O}_{j},
$$ 
where $\mathcal{J} \subset I$ and $\mathcal{O}_{j}$ is a $G$-orbit of $j$, namely the set of the form $G\cdot j$. 
Any $G$-orbit $G\cdot j$ is isomorphic to the set $G/\mbox{Stab}(j)$. We denote $|\mathcal{J}|$ by $J$ and $|\mathcal{O}_j|$ by $l_j$.
For each $j \in \mathcal{J}$, let 
$
 G = \bigsqcup_{j \in \mathcal{J}} \bigsqcup_{k=1}^{l_j}\mbox{Stab}_G(j) \tau_{j,k}
$
be the coset decomposition by $\mbox{Stab}_G(j)$. Then, we may assume that 
$\tau_{j,k}\in G$ satisfies $\tau_{j,k}^{-1}(j) = j + k$. 

Then, we provide another representation for equivariant functions from the following study.:
\begin{proposition}[Representation for Equivariant Functions \cite{sannai2019universal}]\label{prop:relation-invariant-equivariant}
A map $F\colon\mathbb{R}^n \to \mathbb{R}^n$ is $G$-equivariant if and only if $F$ can be 
represented by $F = (f_1\circ \tau_{1,1}, f_1\circ \tau_{1,2}, \dots, f_1\circ \tau_{1,l_1}, f_2\circ\tau_{2,1} \dots, f_{J}\circ \tau_{J,l_{J}})^\top$ for some $\mbox{Stab}_G(j)$-invariant functions 
$f_j\colon\mathbb{R}^n \to \mathbb{R}$.
Here, $\tau_{j,k} \in G$ is regarded as a linear map $\mathbb{R}^n\to \mathbb{R}^n$. 
\end{proposition}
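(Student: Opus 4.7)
The plan is to prove the two directions of the biconditional separately, leveraging the orbit-stabilizer decomposition that is already set up in the paragraph preceding the statement.

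For the easier direction (sufficiency), I assume $F$ has the form $F = (f_1 \circ \tau_{1,1}, \ldots, f_J \circ \tau_{J, l_J})^\top$ with each $f_j$ being $\mathrm{Stab}_G(j)$-invariant, and verify equivariance coordinate-wise. Fix $g \in G$ and an index $i \in [n]$; then $i$ lies in a unique orbit $\mathcal{O}_j$, say $i = \tau_{j,k}^{-1}(j)$, and similarly $g^{-1}(i) = \tau_{j,k'}^{-1}(j)$ for some $k'$. A direct computation shows $\tau_{j,k} \, g = h \, \tau_{j,k'}$ for some $h \in \mathrm{Stab}_G(j)$, because both sides send $\tau_{j,k'}^{-1}(j)$ to $j$ and therefore differ by an element of the stabilizer. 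Using the $\mathrm{Stab}_G(j)$-invariance of $f_j$ then gives $f_j(\tau_{j,k} g x) = f_j(h \tau_{j,k'} x) = f_j(\tau_{j,k'} x)$, which is precisely the identity $(F(gx))_i = F_{g^{-1}(i)}(x) = (g F(x))_i$ that encodes $G$-equivariance.

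For the necessity direction, I assume $F$ is $G$-equivariant and \emph{define} $f_j(x) := F_j(x)$, i.e., the component of $F$ at the representative $j \in \mathcal{J}$. The equivariance identity $F(gx) = g F(x)$ read coordinate-wise says $F_i(gx) = F_{g^{-1}(i)}(x)$; setting $i = j$ and $g = h \in \mathrm{Stab}_G(j)$ yields $f_j(hx) = F_j(x) = f_j(x)$, so $f_j$ is $\mathrm{Stab}_G(j)$-invariant as required. To recover the other components in the orbit of $j$, take $i = \tau_{j,k}^{-1}(j)$ and apply the same identity with $g = \tau_{j,k}$: this gives $F_i(x) = F_j(\tau_{j,k} x) = f_j(\tau_{j,k} x) = (f_j \circ \tau_{j,k})(x)$. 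Doing this for every orbit assembles $F$ into exactly the claimed form.

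The step I expect to require the most care is the coset bookkeeping in the sufficiency direction, namely the identity $\tau_{j,k}\, g = h\, \tau_{j,k'}$ with $h \in \mathrm{Stab}_G(j)$. This is where the coset decomposition $G = \bigsqcup_k \mathrm{Stab}_G(j)\, \tau_{j,k}$ does the real work, and it is the mechanism that lets the stabilizer invariance of $f_j$ ``absorb'' the reshuffling of representatives induced by $g$. The necessity direction is essentially a direct rewriting, but one must be careful that the choice $f_j := F_j$ is well-defined (independent of the representative $\tau_{j,k}$ used to pull components back to the orbit representative), which again follows from $\mathrm{Stab}_G(j)$-invariance. Once the coset identity is in hand, the rest is routine.
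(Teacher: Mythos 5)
Your proof is correct, but there is nothing in the paper to compare it against: the proposition is imported without proof from \cite{sannai2019universal} (the paper explicitly points to Proposition 3.1 of that work), so your write-up is a self-contained argument for a black-boxed ingredient. The argument you give is the standard orbit--stabilizer one and both directions check out. In the sufficiency direction, the key identity $\tau_{j,k}\,g = h\,\tau_{j,k'}$ with $h\in\mathrm{Stab}_G(j)$ holds exactly as you say, because $(\tau_{j,k}\,g)\,\tau_{j,k'}^{-1}$ fixes $j$ (it sends $j\mapsto\tau_{j,k'}^{-1}(j)=g^{-1}(i)\mapsto i\mapsto j$), and the stabilizer invariance of $f_j$ then absorbs $h$; in the necessity direction, setting $f_j:=F_j$ and reading the equivariance identity at coordinate $j$ gives both the invariance and the formula $F_i=f_j\circ\tau_{j,k}$. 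Two minor remarks, neither of which is a gap. First, your chain $(F(gx))_i=F_{g^{-1}(i)}(x)=(gF(x))_i$ presupposes the convention $(g\cdot v)_i=v_{g^{-1}(i)}$ for the permutation action on $\mathbb{R}^n$; this matches the convention the paper uses elsewhere (e.g.\ the tensor-network example), but the statement is convention-sensitive, so it is worth saying explicitly. Second, $f_j:=F_j$ is automatically well defined; what the stabilizer invariance actually buys in that direction is that the formula $F_i=f_j\circ\tau_{j,k}$ is independent of which representative of the right coset $\mathrm{Stab}_G(j)\,\tau_{j,k}$ is chosen --- a consistency check rather than a well-definedness issue. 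You also implicitly use that $k\mapsto\tau_{j,k}^{-1}(j)$ bijects $\{1,\dots,l_j\}$ with $\mathcal{O}_j$, which is exactly the orbit--stabilizer correspondence underlying the coset decomposition; it would cost one sentence to state.
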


\begin{proposition} \label{prop:equiv_general}
For any $\varepsilon >0 $, we have
\begin{align*}
    &\tilde{\gN}_{\varepsilon,\infty}( \tilde{\gF}^{G}(I)) \leq  \prod_{j \in \mathcal{J}} \gN_{\varepsilon,\infty}( \gF^{\mbox{Stab}_G(j)}(I_{l_j})),
\end{align*}
where $I_{l_j}=[0,1]^{l_j}$.
Further, if $G=S_n$,
\[
  \tilde{\gN}_{\varepsilon,\infty}( \tilde{\gF}^{S_n}(I))  \leq \gN_{\varepsilon,\infty}( \gF^{S_{n-1}}(I)). 
\]
\end{proposition}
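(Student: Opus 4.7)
The plan is to exploit the structural decomposition of equivariant functions provided by Proposition \ref{prop:relation-invariant-equivariant}. Any $F \in \tilde{\gF}^G(I)$ admits a representation
\[
F = (f_1 \circ \tau_{1,1}, \ldots, f_1 \circ \tau_{1,l_1}, f_2 \circ \tau_{2,1}, \ldots, f_J \circ \tau_{J,l_J})^\top,
\]
where each $f_j$ is $\mathrm{Stab}_G(j)$-invariant. The key idea is that an $\varepsilon$-cover of $\tilde{\gF}^G(I)$ can then be built by combining $\varepsilon$-covers of the component invariant function classes, which accounts for the product structure in the claimed bound.

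First, for each orbit representative $j \in \mathcal{J}$, I would fix an $\varepsilon$-covering $\{\hat{f}_j^{(1)}, \ldots, \hat{f}_j^{(N_j)}\}$ of $\gF^{\mathrm{Stab}_G(j)}(I_{l_j})$ of cardinality $N_j = \gN_{\varepsilon,\infty}(\gF^{\mathrm{Stab}_G(j)}(I_{l_j}))$. Given an arbitrary $F$ with components $f_j$ as above, one picks, for each $j$, an element $\hat{f}_j^{(i_j)}$ from the covering approximating $f_j$ within $\varepsilon$ in $L^\infty$. This yields a candidate cover element $\hat{F}$ obtained by replacing each $f_j$ by $\hat{f}_j^{(i_j)}$ in the decomposition. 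The total number of such $\hat{F}$ is $\prod_{j \in \mathcal{J}} N_j$, matching the right-hand side.

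The central computation is the error bound. For every coordinate $(j,k)$, the map $\tau_{j,k} \in G \subseteq S_n$ acts as a coordinate permutation and therefore restricts to a bijection $I \to I$. Hence
\[
\|f_j \circ \tau_{j,k} - \hat{f}_j^{(i_j)} \circ \tau_{j,k}\|_{L^\infty(I)} = \|f_j - \hat{f}_j^{(i_j)}\|_{L^\infty(I)} \leq \varepsilon
\]
by a change of variable. Taking the max over all $(j,k)$ gives
\[
\vertiii{F - \hat{F}}_{L^\infty(I)} = \max_{j,k} \|f_j \circ \tau_{j,k} - \hat{f}_j^{(i_j)} \circ \tau_{j,k}\|_{L^\infty(I)} \leq \varepsilon,
\]
which establishes the product inequality. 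For the $S_n$ case, transitivity of the action gives $|\mathcal{J}| = 1$, $l_1 = n$, and $\mathrm{Stab}_{S_n}(1) \cong S_{n-1}$, so the product collapses to the stated single factor $\gN_{\varepsilon,\infty}(\gF^{S_{n-1}}(I))$.

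The main obstacle is reconciling the domains: Proposition \ref{prop:relation-invariant-equivariant} presents each $f_j$ as a function on all of $\mathbb{R}^n$, whereas the bound involves covers of $\gF^{\mathrm{Stab}_G(j)}(I_{l_j})$ on the $l_j$-dimensional cube. The resolution is to use that $\mathrm{Stab}_G(j)$-invariance, combined with the orbit structure, lets one identify $f_j$ with a function on the coordinate subspace indexed by $\mathcal{O}_j$ (of dimension $l_j$) on which $\mathrm{Stab}_G(j)$ acts naturally, so that restriction to this subspace is an isometry of the relevant $L^\infty$ norms. Once this identification is made explicit, the combinatorial product argument above is routine; the $S_n$ corollary then follows immediately.
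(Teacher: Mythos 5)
Your proposal is correct and follows essentially the same route as the paper's proof: decompose each equivariant $F$ via Proposition \ref{prop:relation-invariant-equivariant}, build a product cover from $\varepsilon$-covers of the $\mathrm{Stab}_G(j)$-invariant component classes, and use that composition with the permutations $\tau_{j,k}$ preserves the sup norm so the max over coordinates stays below $\varepsilon$. Your explicit handling of the $I$ versus $I_{l_j}$ domain identification is in fact slightly more careful than the paper's, which glosses over that point.
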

 \begin{proof}[Proof of Proposition \ref{prop:equiv_general}]
 We put $N_j =\gN_{\varepsilon,\infty}( \gF^{\mbox{Stab}_G(j)}(I))$.
 For each $j \in \mathcal{J}$, by the definition of covering numbers, there exist $f_j^{(1)},..,f_j^{(N_j)} \in \gF^{\mathrm{Stab}_G{(j)}}(I_{l_j})$ such that for any $f' \in \gF^{\mathrm{Stab}_G{(j)}}(I_{l_j})$, there exists $f^{(p)}_j$ satisfying $\|f' - f^{(p)}_j\|_{\infty} < \varepsilon$.
 
With a tuple $(p_1,...,p_J)$, we consider a map $F_{p_1,..,p_{J}}: I \to \R^n$ from $\tilde{\gF}^G(I)$ and claim that balls $\gB_{\varepsilon}( F_{p_1,..,p_J})$ give a covering set of $\tilde{\gF}(I)$.
Put $F_{p_1,..,p_{J}} = (f_1^{(p_1)}\circ \tau_{1,1}, f_1^{(p_1)}\circ \tau_{1,2}, \dots, f_1^{p_1}\circ \tau_{1,l_1}, f_2^{(p_2)}\circ\tau_{2,1} \dots, f_{J}^{(p_{J})}\circ \tau_{J,l_{J}})^\top.$ 
Then $F_{p_1,..,p_{J}}$ is a $G$-equivariant map.
Also, since $\tau_{j,k}$ is a linear map by Proposition \ref{prop:relation-invariant-equivariant}, we can represent $\tau_{j,k}$ by DNNs.
Hence, $F_{p_1,..,p_{J}} \in \tilde{\gF}^G(I)$ holds.

Fix $F' \in \tilde{\gF}^G(I)$ arbitrary.
We have the representation $F' = (f'_1\circ \tau_{1,1}, f'_1\circ \tau_{1,2}, \dots, f'_1\circ \tau_{1,l_1}, f'_2\circ\tau_{2,1} \dots, f'_{J}\circ \tau_{j,l_{J}})^\top$ by Proposition\ref{prop:relation-invariant-equivariant}. 
Then, we can find a corresponding $F_{p_1,..,p_{J}}$ such as
\begin{align*}
    \vertiii{F_{p_1,..,p_J}- F'}_{L^\infty(I)}&= \max\{\|f^{(p_j)}_j\circ \tau_{j,k_j}-f'_{j}\circ \tau_{j,k_j}\|_{\infty}  \mid\ 1\leq k_j \leq |G/ \mbox{Stab}_G(j)|, 1\leq p_j \leq N_j \}\\
    &=\max\{\|f^{(p_j)}_j-f'_{j}\|_{\infty} \mid\  1\leq p_j \leq N_j \}\\
    &\leq \varepsilon.
 \end{align*}
Hence, we have the first statement.

In the case of $S_n$, we have $J=1$ and $\mbox{Stab}(1)\cong S_{n-1}$. This gives the second statement.
\end{proof}

Then, we obtain the following general bound:
\begin{theorem}[Generalization of Equivariant DNN] \label{thm:main1_equiv_no_transitive}
    Suppose $\tilde{f}^G \in \tilde{\gF}^{G}(I)$ is uniformly bounded by $1$.
    Then, for any $\varepsilon > 0$, the following inequality holds with probability at least $1-2\varepsilon$:
    \begin{align*}
        &R(\tilde{f}^G) \leq R_m(\tilde{f}^G)  + {\sqrt{ \sum_{j\in \mathcal{J}}\frac{ \tilde{c} }{|\mbox{Stab}_G(j)|~m^{2/n} }}}+ {\sqrt{\frac{2\log  ( 2/\varepsilon )}{m}}}.
    \end{align*}
    where $\tilde{c} > 0$ is a constant which are independent of $n$ and $m$.
\end{theorem}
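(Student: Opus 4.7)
\textbf{Proof proposal for Theorem \ref{thm:main1_equiv_no_transitive}.}
The plan is to mimic the proof of Theorem \ref{thm:main1_equiv} but use Proposition \ref{prop:equiv_general} in place of Proposition \ref{prop:covering_equiv_transitive}, so that the non-transitive structure is handled orbit by orbit. First, I would start from the standard Rademacher-complexity generalization bound \eqref{ineq:basic_bound}, applied to the set $\tilde{\gF}^G(I)$: with probability at least $1-2\varepsilon$,
\begin{align*}
\gG(\tilde{\gF}^G) \leq \sqrt{\frac{2\log(1/2\varepsilon)}{m}} + \inf_{\alpha \geq 0}\Bigl\{ 4\alpha + \tfrac{12}{\sqrt{m}}\int_\alpha^{\sqrt{m}} \sqrt{2\log 2\tilde{\gN}_{\delta,\infty}(\tilde{\gF}^G(I))}\, d\delta \Bigr\}.
\end{align*}

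Next, I would invoke Proposition \ref{prop:equiv_general} to split the covering number along the $G$-orbits of $[n]$, giving
\begin{align*}
\log \tilde{\gN}_{\delta,\infty}(\tilde{\gF}^G(I)) \leq \sum_{j \in \mathcal{J}} \log \gN_{\delta,\infty}(\gF^{\mathrm{Stab}_G(j)}(I_{l_j})).
\end{align*}
For each orbit $j$, Proposition \ref{prop:covering_bound_g} followed by Proposition \ref{prop:covering_bound_delta} bounds the $j$-th log covering number by a linear function of $\gN_{\delta,\infty}(\Delta_{\mathrm{Stab}_G(j)})$, and Lemma \ref{lem:covering_g} (applied inside $I_{l_j}$) gives $\gN_{\delta,\infty}(\Delta_{\mathrm{Stab}_G(j)}) \leq C/(|\mathrm{Stab}_G(j)|\delta^{l_j})$. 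Using $l_j \leq n$ and $\delta \leq 1$ to pass from $\delta^{l_j}$ to $\delta^{n}$ yields the uniform-in-$j$ bound
\begin{align*}
\log \tilde{\gN}_{\delta,\infty}(\tilde{\gF}^G(I)) \leq \sum_{j \in \mathcal{J}} \frac{C}{|\mathrm{Stab}_G(j)|\, \delta^{n}} \log(c'/\delta),
\end{align*}
which is additive in $j$ and has the same $\delta$-dependence as in the proof of Theorem \ref{thm:main1_equiv}.

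Finally, I would plug this into the Dudley integral and optimize over $\alpha$ exactly as in the transitive case. Because the dependence on $j$ appears only through the prefactor $1/|\mathrm{Stab}_G(j)|$ which is additive, the optimized integral contributes $\sqrt{\sum_{j} \tilde{c}/(|\mathrm{Stab}_G(j)|\, m^{2/n})}$ to the generalization bound, and combining with the residual $\sqrt{2\log(2/\varepsilon)/m}$ term gives the claimed inequality. The one technical point requiring care is the choice of $\alpha$ in the Dudley integral: since the integrand behaves as $\delta^{-n/2}$ weighted by a sum over orbits, I would pick $\alpha$ of order $m^{-1/n}$ so that the polynomial tail dominates and the resulting optimum lands exactly on the $m^{2/n}$ scaling; this is the step most prone to bookkeeping errors but otherwise follows the same template as Theorem \ref{thm:main1_equiv}, so I expect no fundamentally new difficulty.
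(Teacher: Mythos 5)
Your proposal is correct and takes essentially the route the paper intends: the paper explicitly omits a rigorous proof, saying only that it is ``almost the same'' as Theorem \ref{thm:main1_equiv}, and your argument fills in exactly the intended missing step, namely using Proposition \ref{prop:equiv_general} to make the log covering number additive over the $G$-orbits before applying the invariant-case machinery (Propositions \ref{prop:covering_bound_g} and \ref{prop:covering_bound_delta}, Lemma \ref{lem:covering_g}) factor by factor and plugging into the Dudley integral \eqref{ineq:basic_bound}. Your writeup is in fact more detailed than the paper's own treatment.
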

We omit rigorous proof of Theorem \eqref{thm:main1_equiv_no_transitive}, because it is almost same to that of Theorem \ref{thm:main1_equiv}.

\end{document}